\PassOptionsToPackage{table,dvipsnames}{xcolor}
\PassOptionsToPackage{dvipsnames}{colortbl}
\documentclass{article} 
\usepackage{iclr2027_conference,times}
\usepackage{enumerate}
\usepackage{fontawesome5}

\usepackage{amsmath,amsfonts,bm}

\def\1{\bm{1}}

\def\re{{\textnormal{e}}}

\DeclareMathAlphabet{\mathsfit}{\encodingdefault}{\sfdefault}{m}{sl}
\SetMathAlphabet{\mathsfit}{bold}{\encodingdefault}{\sfdefault}{bx}{n}

\usepackage{booktabs} 
\usepackage{multirow} 
\usepackage{longtable} 
\usepackage{colortbl} 
\usepackage{arydshln} 
\usepackage{graphicx} 
\usepackage{amsmath,amssymb}
\usepackage{amsthm}
\usepackage{enumitem} 
\usepackage{subcaption} 
\usepackage{wrapfig} 
\usepackage{needspace} 
\usepackage{tikz} 
\usetikzlibrary{arrows.meta,positioning,calc,fit,backgrounds}
\usepackage{listings} 
\usepackage{tcolorbox}
\tcbuselibrary{listings,breakable}
\definecolor{arsenic}{rgb}{0.23,0.27,0.29}
\lstdefinestyle{promptstyle}{
  basicstyle=\ttfamily\footnotesize\color{black},
  columns=fullflexible,
  keepspaces=true,
  breaklines=true,
  showstringspaces=false,
  tabsize=2,
}
\newtcblisting{promptbox}{
  listing only,
  unbreakable,
  colback=white,
  colframe=black!15,
  boxrule=0.5pt,
  arc=2pt,
  left=6pt,right=6pt,top=6pt,bottom=6pt,
  listing options={style=promptstyle}
}
\usepackage{hyperref}
\usepackage{url}

\definecolor{amiePurple}{HTML}{6B5B95}
\definecolor{amiePurpleBg}{HTML}{EDE7F6}
\definecolor{amieNeutral}{HTML}{8A8A8A}
\colorlet{amieGain}{amiePurple}
\colorlet{amieLoss}{amiePurple}

\definecolor{amiePositiveCell}{HTML}{719B7D}
\definecolor{amieNegativeCell}{HTML}{BA7D7D}
\newcommand{\amieCell}[2]{%
  \pgfmathsetmacro{\amieTint}{min(40,8+8*abs(#2))}%
  \ifdim #2pt>0pt
    \edef\amieShade{\noexpand\cellcolor{amiePositiveCell!\amieTint!white}}%
  \else\ifdim #2pt<0pt
    \edef\amieShade{\noexpand\cellcolor{amieNegativeCell!\amieTint!white}}%
  \else
    \def\amieShade{\cellcolor{white}}%
  \fi\fi
  \amieShade\ensuremath{#1_{#2}}}


\definecolor{darkcerulean}{rgb}{0.03, 0.27, 0.49}
\definecolor{burgundy}{rgb}{0.7, 0.0, 0.13}
\definecolor{majorelleblue}{rgb}{0.38, 0.31, 0.86}
\definecolor{forestgreen}{rgb}{0.13, 0.55, 0.13}
\definecolor{arylideyellow}{rgb}{0.89, 0.61, 0.06}
\definecolor{arsenic}{rgb}{0.23, 0.27, 0.29}
\hypersetup{
  colorlinks=true,
  citecolor=darkcerulean,
  linkcolor=burgundy,
  urlcolor=majorelleblue
}
\newcommand{\LinkTable}[1]{\hyperref[#1]{Table~\ref*{#1}}}
\newcommand{\LinkFigure}[1]{\hyperref[#1]{Figure~\ref*{#1}}}
\newcommand{\LinkFigurePanel}[2]{\hyperref[#1]{Figure~\ref*{#1}#2}}
\newcommand{\LinkSection}[1]{\hyperref[#1]{Section~\ref*{#1}}}
\newcommand{\LinkAppendix}[1]{\hyperref[#1]{Appendix~\ref*{#1}}}
\newcommand{\LinkEquation}[1]{\hyperref[#1]{Equation~\ref*{#1}}}
\newcommand{\LinkProposition}[1]{\hyperref[#1]{Proposition~\ref*{#1}}}
\newcommand{\LinkCorollary}[1]{\hyperref[#1]{Corollary~\ref*{#1}}}
\newcommand{\LinkAlgorithm}[1]{\hyperref[#1]{Algorithm~\ref*{#1}}}

\newtheorem{proposition}{Proposition}

\title{Adaptive Memory Index Evolution for Agents}

\author{
Yuntong Hu\textsuperscript{1},
Tongli Su\textsuperscript{1},
Yuanfu Sun\textsuperscript{2},
Shuo Ma\textsuperscript{3},
Qiaoyu Tan\textsuperscript{2},
Liang Zhao\textsuperscript{1}
\\
\textsuperscript{1}Emory University, 
\textsuperscript{2}New York University, 
\textsuperscript{3}Meituan,
\\
\texttt{\{yuntong.hu,liang.zhao\}@emory.edu}
}

\iclrfinalcopy

\newcommand{\amieUtilityWins}{15}

\newcommand{\amieRetrievalWins}{10}

\begin{document}

\maketitle

\begin{abstract}
Long-horizon agents continually accumulate experience, yet retaining a memory
does not guarantee access when it is needed. Existing systems can adapt both
memory content and access; we isolate persistent access updates as a separately
controlled adaptation layer. We formulate this target as \emph{Memory Index
Evolution} and instantiate it with \emph{Adaptive Memory Index Evolution}
(AMIE), a lightweight plug-in layer that evolves per-memory access state from
deployment retrievals without modifying stored content or retraining the host
memory system. AMIE distils a semantic teacher's keep/suppress judgments into
a retrieval-conditioned policy and accumulates its selective actions into
bounded state that reweights subsequent host retrievals. Surprisingly, even
a zero-call policy that simply accumulates retrieval frequency can match or
outperform semantic policies in several settings, revealing retrieval traffic
itself as a strong signal for access adaptation, while learned selectivity
improves robustness when frequently retrieved memories remain useful. Across
five retrieval-based hosts and three benchmarks, trained AMIE improves
downstream F1 or accuracy in all 15 comparisons, by 0.70--3.37 percentage
points (up to 25\% relative), and improves retrieval in all 10 comparisons.
Controlled analyses show that persistent competitor suppression can expose previously unseen evidence, while suppressing repeatedly useful memories creates the complementary failure mode. Together, these results establish persistent memory access as a practical
adaptation layer, distinct from the content it serves, in retrieval-based agent memory systems.~~\small
    \href{https://github.com/HuieL/AMIE}{
        \faGithub\enspace\texttt{Code}
    }
\end{abstract}

\begin{figure}[htbp]
\centering
\includegraphics[width=\linewidth]{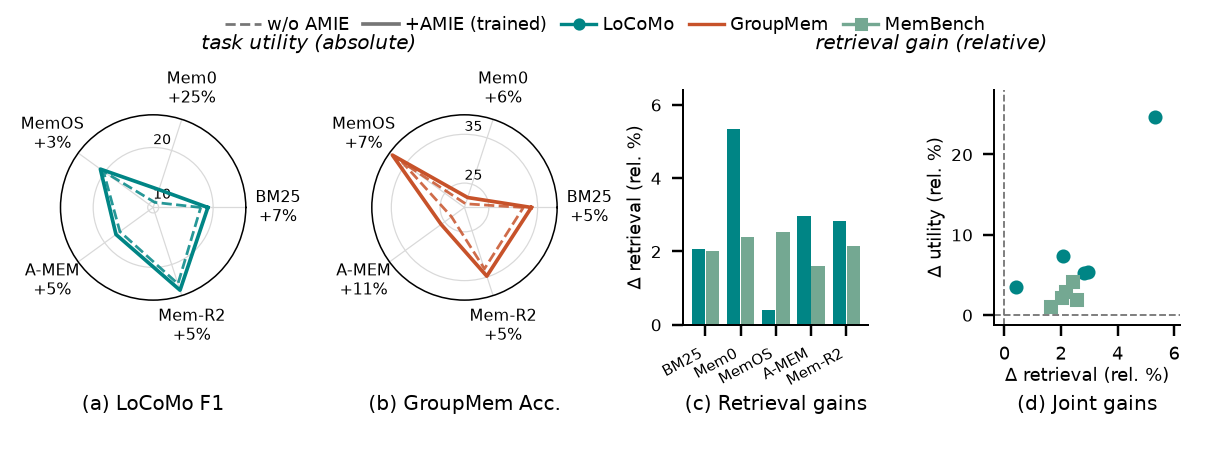}
\caption{\textbf{Persistent index adaptation improves both memory access and downstream utility across heterogeneous memory systems.} 
We evaluate three complementary memory settings: LoCoMo for long-horizon conversation, GroupMem for cross-channel group memory, and MemBench for personal memory.
Trained AMIE improves all downstream host--benchmark comparisons and all comparisons with retrieval measurements. (a,b) Absolute LoCoMo F1 and GroupMem accuracy, with relative gains over each matched host shown at the vertices. (c) Relative retrieval gains on LoCoMo and MemBench. (d) All ten pairs measured at both levels improve in both retrieval and downstream utility, showing that the benefits of index evolution extend beyond retrieval metrics to the tasks memory ultimately serves. 
}
\label{fig:ga-summary}
\end{figure}

\section{Introduction}
\label{sec:intro}

Long-horizon agents accumulate observations, decisions, and successful
procedures that they must recover when a later task needs them
\citep{yao2023react,shinn2023reflexion,packer2023memgpt,maharana2024locomo,wang2023voyager}.
Retaining this experience is only part of the problem: stored information
can remain intact while becoming difficult to retrieve. Consider an agent
that retains the episode that solved a software failure, the kind of
recurring task that realistic agent benchmarks now probe
\citep{jimenez2024swebench,zhou2024webarena,merrill2026terminalbench}. As similar stack
traces and configuration notes accumulate, they may rank above that episode
when the failure returns. The useful experience has not been deleted or
rewritten; access to it has changed.

Memory systems address long-term use through extraction, consolidation,
tiers, graphs, and learned memory operations
\citep{zhong2024memorybank,mem0,xu2025amem,gmemory,yan2025memoryr1,atommem2026}.
Existing systems may adapt both memory content and access, including
recall-dependent strength, access recency, and persistent graph utilities
\citep{zhong2024memorybank,park2023generative,expgraph2026}.
We instead isolate persistent index updates as a separately controlled layer
over the host retrieval interface. This separation asks a specific question:
\begin{tcolorbox}[colback=black!5, colframe=black!18, boxrule=0.4pt, arc=2pt,
  left=6pt, right=6pt, top=5pt, bottom=5pt]
\centering
\emph{Can deployment experience improve memory access without changing stored content?}
\end{tcolorbox}

\emph{Memory Index Evolution} makes persistent access state an explicit
adaptation target, distinct from stored content. Two agents with identical
stores can encounter different query streams and benefit from different
access states. A finite retrieval context makes selection consequential
\citep{liu2023lostmiddle}, but the adaptation target is the state carried
between queries, rather than
a new context budget or a replacement query-time ranker.

\emph{Adaptive Memory Index Evolution} (AMIE) instantiates this layer with
a trained index policy that learns which retrieved memories should accumulate
suppression. We distil a semantic teacher's judgments into a compact action
model, then carry its decisions forward through bounded per-memory state.
The current answer context is sealed before an update is committed; only
later host retrievals use the new state. We also find that replacing learned
selection with unconditional counting can approach frozen-policy performance
on LoCoMo BM25 and MemoryOS, showing how much access improvement retrieval
traffic alone can supply in those settings. Selective actions address the complementary risk of
penalising memories that remain useful.

Across five retrieval-based hosts and three benchmarks, trained AMIE improves
the reported utility scores; the ten pairs with retrieval measurements also
show retrieval gains
(\LinkFigure{fig:ga-summary}).
Matched host comparisons assess utility; controlled replays isolate the
benefits and risks of suppressing recurrent competitors. State interventions,
deployment swaps, and granularity analyses test the persistent access layer
(\LinkSection{sec:experiments}).
Our contributions are:
\begin{itemize}[left=0pt]
\item \textbf{Memory Index Evolution.} We isolate persistent access state as an
adaptation target complementary to stored content, with value assessed on
subsequent queries (\LinkSection{sec:formulation}).
\item \textbf{Adaptive Memory Index Evolution.} We train a retrieval-conditioned
index policy by balanced mask distillation and couple its selective actions
to bounded, persistent score updates within native host interfaces
(\LinkSection{sec:method}).
\item \textbf{Empirical regularities.} Controlled analyses reveal how retrieval history, persistent suppression, and policy selectivity govern access adaptation, and identify the structural conditions under which it can expose unseen evidence without suppressing useful memories (\LinkSection{sec:experiments}).
\end{itemize}

\section{Problem Formulation}
\label{sec:formulation}

\paragraph{Retrieval-based Agent Memory.}
Let $\mathcal{C}_t=\{m_1,\ldots,m_{N_t}\}$ denote the experience retained
by an agent at time $t$. Given a query $q$, memory is useful only through
the downstream task it supports. We therefore define the ideal memory
support as
\begin{equation}
    \mathcal{S}^{*}(q)
    \in
    \operatorname*{arg\,max}_{\mathcal{S}\subseteq\mathcal{C}_t}
    \mathcal{U}\!\left(q,\mathcal{A}(\cdot\mid q,\mathcal{S})
    \right),
    \label{eq:memory-objective}
\end{equation}
where $\mathcal{A}$ is the downstream agent or generator and
$\mathcal{U}$ is task utility. The maximisation permits the full store as a
special case; more exposed content need not improve utility. Retained content
defines the available evidence, while the index determines what the host
surfaces for the agent.

\paragraph{Content and Index Evolution.}
We therefore distinguish the retained content $\mathcal{C}_t$ from the
\emph{memory index} $\mathcal{I}_t$, the state governing how that content
is surfaced:
\begin{equation}
    \mathcal{M}_t=(\mathcal{C}_t,\mathcal{I}_t),
    \qquad
    \mathcal{S}_t
    =
    \mathcal{R}(q_t;\mathcal{C}_t,\mathcal{I}_t)
    \subseteq\mathcal{C}_t .
    \label{eq:memory}
\end{equation}
Here $\mathcal{R}$ denotes the host retrieval procedure; our formulation
does not assume a particular implementation. The same memory objective
can then be improved along two complementary axes,
\begin{align}
    \underbrace{
    \mathcal{C}_{t+1}=G(\mathcal{C}_t,\mathcal{O}_t)
    }_{\text{content evolution}},
    \qquad
    \underbrace{
    \mathcal{I}_{t+1}=F(\mathcal{I}_t,\mathcal{O}_t)
    }_{\text{index evolution}},
    \label{eq:two-axis-evolution}
\end{align}
where $\mathcal{O}_t$ is information available from the runtime
interaction. $G$ learns \emph{what to retain}; $F$ learns \emph{how
retained experience should remain reachable}. We isolate the latter as a
separate update process learned from retrieval observations. Holding content
fixed defines the controlled intervention here; it does not preclude combining
index and content evolution.

\paragraph{Memory Index Evolution.}
Index evolution seeks an update rule $F$ that improves the same
downstream objective as \LinkEquation{eq:memory-objective} over a lifelong
query stream:
\begin{equation}
    F^\star
    =
    \operatorname*{arg\,max}_{F}
    \mathbb{E}\!\left[
        \sum_{t=1}^{T}
        \mathcal{U}\!\left(q_t,\,
            \mathcal{A}\!\left(\cdot\mid q_t,\,
                \mathcal{R}(q_t;\mathcal{C}_t,\mathcal{I}_t)
            \right)
        \right)
    \right],
    \qquad
    \mathcal{I}_{t+1}=F(\mathcal{I}_t,\mathcal{O}_t).
    \label{eq:index-objective}
\end{equation}
Because an update to $\mathcal{I}_t$ affects subsequent queries, its value
must be assessed prospectively rather than on the interaction that produced
it.
Index evolution poses three design challenges:

\begin{itemize}[left=0pt]
\item\phantomsection\label{ch:interference}
\textbf{C1: What signal reveals access interference?}
Interference depends on the queries a deployment actually issues and
cannot be determined from memory content alone.

\item\phantomsection\label{ch:prospective}
\textbf{C2: Which updates generalize forward?}
A persistent index update must benefit unseen queries rather than merely
reinforce the interaction that produced it.

\item\phantomsection\label{ch:inflow}
\textbf{C3: How can the index evolve in a running system?}
Index adaptation must operate online through the host's retrieval interface.
\end{itemize}

\section{Adaptive Memory Index Evolution}
\label{sec:method}

AMIE addresses the three challenges through coupled decisions about
\emph{what to update}, \emph{how the update persists}, and \emph{where it
acts}. A \emph{retrieval-conditioned index policy} identifies suppression
actions from deployment queries and retrieved memories, supplying the
interaction-grounded signal required by \hyperref[ch:interference]{C1}
(\LinkSection{sec:method-estimation}). \emph{Bounded persistent access
dynamics} carry these actions across queries, addressing the prospective
update required by \hyperref[ch:prospective]{C2}
(\LinkSection{sec:method-reweight}). \emph{In-the-flow index evolution}
applies this state within the host's scored candidate interface, addressing
\hyperref[ch:inflow]{C3} without rewriting stored content
(\LinkSection{sec:method-inflow}).

This design factors the update in \LinkEquation{eq:index-objective} into
a retrieval-conditioned policy and a persistent state transition:
\begin{equation}
F_\theta(\mathcal I_t,\mathcal O_t)
=\mathsf{T}\big(\mathcal I_t,\pi_\theta(\mathcal O_t)\big).
\label{eq:policy-factorization}
\end{equation}
Here $\pi_\theta$ selects per-memory actions and $\mathsf T$ converts them
into future-access state. Since future utility is unavailable when an action
is taken, teacher-mask distillation supplies a surrogate for fitting
$\pi_\theta$; subsequent-query utility evaluates the resulting state against
the prospective objective. \LinkFigure{fig:method} shows this separation
between offline policy learning and online index evolution.

\begin{figure}[t]
\centering
\begingroup
\definecolor{otblue}{HTML}{809BCE}
\definecolor{otorange}{HTML}{95B8D1}
\definecolor{otteal}{HTML}{B8E0D4}
\definecolor{otgray}{HTML}{D6EADF}
\definecolor{otpurple}{HTML}{EAC4D5}
\def\blockwidth{28mm}
\def\columnstep{3.48}
\def\rowstep{1.55}
\begin{tikzpicture}[
 x=1cm,y=1cm,font=\fontsize{8}{9}\selectfont,
 box/.style={draw=otgray!55!black,fill=otgray!35,rounded corners=2pt,
 align=center,inner xsep=2.5mm,inner ysep=2.5mm,
 minimum height=10mm,text width=\blockwidth},
 wrap/.style={box,draw=otblue!65!black,fill=otblue!30},
 state/.style={box,draw=otpurple!55!black,fill=otpurple!40},
 flow/.style={-{Stealth[length=1.4mm]},line width=.6pt},
 control/.style={flow,dashed,draw=otpurple!45!black},
 note/.style={font=\fontsize{7}{8}\selectfont,align=center}
]
\node[box] (native) at (0,0) {Native candidates\\scored pool $C_t$};
\node[wrap] (rank) at (\columnstep,0) {AMIE re-rank\\using $\mathbf u_t$};
\node[box] (seal) at ({2*\columnstep},0) {Seal answer context\\$\mathcal S_t$};
\node[box] (gen) at ({3*\columnstep},0) {Generator\\current answer};
\node[note] (q) at (0,1.15) {Query $q_t$};
\draw[flow] (q)--(native);
\draw[flow] (native)--(rank);
\draw[flow] (rank)--(seal);
\draw[flow] (seal)--(gen);
\node[state,minimum height=6mm,inner ysep=1.5mm]
 (pre) at (\columnstep,1.15) {Persistent state $\mathbf u_t$};
\draw[control] (pre)--(rank);
\node[note,text width=65mm] at ({2.5*\columnstep},1.15)
 {Offline: teacher masks train $\pi_\theta$.\\Online: $\theta$ fixed; access state evolves.};
\node[box] (fb) at (0,-\rowstep) {Native feedback\\$\mathcal B_t$};
\node[wrap] (policy) at (\columnstep,-\rowstep)
 {Index policy $\pi_\theta$\\query $q_t$ + feedback $\mathcal B_t$};
\node[state] (update) at ({2*\columnstep},-\rowstep)
 {Commit mask $\mathbf a_t$\\$\mathbf u_{t+1}=\mathsf T(\mathbf u_t,\mathbf a_t)$};
\node[box,draw=otteal!55!black,fill=otteal!45]
 (later) at ({3*\columnstep},-\rowstep)
 {Future retrieval\\uses $\mathbf u_{t+1}$};
\draw[flow] (native)--(fb);
\draw[flow] (fb)--(policy);
\draw[flow] (policy)--(update);
\draw[control] (seal)--node[right,note]{after sealing}(update);
\draw[control] (update)--(later);
\node[note,text width=\linewidth,inner xsep=0pt] at ({1.5*\columnstep},-2.4)
 {Selected competitors lose priority; useful in-pool memories can rise on later queries.};
\end{tikzpicture}
\endgroup
\caption{\textbf{AMIE applies persistent access state to subsequent retrievals.}
Native candidates supply both the answer context, ranked with pre-update
state, and the policy's feedback. Dashed arrows show state access and the
commit constraint: updates occur after context sealing and affect later queries.}
\label{fig:method}
\end{figure}

\subsection{Retrieval-Conditioned Index Policy}
\label{sec:method-estimation}
\label{sec:method-scope}

\paragraph{Predict selective updates from native retrieval feedback.}
The local decision is whether a surfaced memory deserves a persistent
penalty, which depends on its relation to the query. The policy therefore observes $q_t$ and
the host's ordered native feedback list $\mathcal B_t$. This list is distinct
from the answer context $\mathcal S_t$, which AMIE re-ranks using existing
state. The observation $\mathcal O_t$ contains the query and feedback memories'
text; future queries and reference answers are excluded. The action space is
\begin{equation}
\mathbf a_t=\pi_\theta(\mathcal O_t),\qquad
 a_{i,t}\in\{0,1\},\qquad
 a_{i,t}=0\ \text{for }m_i\notin\mathcal B_t.
\label{eq:policy-mask}
\end{equation}
An action of one increments a memory's suppression state; zero preserves
its current state. A keep action avoids additional suppression; it does not
remove an accumulated penalty or prevent changes in relative rank.

\paragraph{A language model predicts the action mask in one forward pass.}
The policy jointly encodes the query and ordered feedback memories to
predict a suppression probability $p_{\theta,i}(\mathcal O_t)$ for each
surfaced entry. Its binary decisions map back to stable memory IDs, so a
query-conditioned judgment becomes an update to a persistent object.
The complete mask is emitted in one pass, without per-entry textual
judgments; the backbone, prediction head, and fitting configuration are in
\LinkAppendix{app:selected-policy}.

\paragraph{Teacher masks supply a surrogate for selective state updates.}
\phantomsection\label{sec:method-trained}
A teacher labels feedback memories as keep or suppress,
favouring keep when uncertain about future usefulness
(\LinkFigure{fig:prompt-mask}). On an offline collection $\mathcal D$
of observations and teacher masks, we fit the student with
class-balanced binary cross-entropy:
\begin{equation}
\mathcal L(\theta)=\mathbb E_{(\mathcal O_t,\mathbf y_t)\sim\mathcal D}
\left[\frac{1}{|\mathcal B_t|}\sum_{i:m_i\in\mathcal B_t}
 w_{y_{i,t}}\,\ell_{\mathrm{BCE}}\big(p_{\theta,i}(\mathcal O_t),y_{i,t}\big)\right],
\label{eq:policy-imitation}
\end{equation}
where $y=1$ denotes suppress, $y=0$ denotes keep, and $w_y$ balances
the two classes. At deployment,
$\theta$ is fixed: adaptation occurs in the persistent index state.
Fitting details and benchmark exposure are in \LinkAppendix{app:selected-policy};
\LinkAppendix{app:prospective-value} defines the later-query evaluation criterion.

\paragraph{The action interface separates selectivity from persistence.}
\phantomsection\label{sec:method-count}
\label{sec:method-llm}
The frozen selector prompts the same backbone with the mask instruction,
without task-specific fitting. The count selector sets $a_{i,t}=1$ for every
native feedback entry, using repeated exposure as a proxy for interference
with no model calls. These alternatives share the trained policy's state
transition and host wrapper, isolating what learned selectivity adds
(\LinkSection{sec:exp-count-default}).

\subsection{Bounded Persistent Access Dynamics}
\label{sec:method-reweight}
\label{sec:method-policy}
\paragraph{Accumulate selected actions into bounded access state.}
We instantiate the persistent access state with bounded multiplicative
suppression. To realise the cross-query effect in
\LinkEquation{eq:index-objective}, AMIE accumulates actions by memory
identity: an observation changes a retained access variable that subsequent
queries can reuse. Its access state is
\begin{equation}
\mathcal I_t=\mathbf u_t=(u_{1,t},\ldots,u_{N,t}),\qquad u_{i,t}\in[0,1].
\label{eq:state}
\end{equation}
The underlying counters evolve as
\begin{equation}
n_{i,t}=\sum_{\tau<t}a_{i,\tau},\qquad
n_{i,t+1}=n_{i,t}+a_{i,t},\qquad n_{i,1}=0.
\label{eq:masked-update}
\end{equation}
Counts are converted to bounded suppression state,
\begin{equation}
u_{i,t}=g(n_{i,t})=\frac{n_{i,t}}{n_{i,t}+\kappa},\qquad \kappa>0,
\label{eq:shrinkage}
\end{equation}
where $\kappa$ sets the number of selected events needed for half-saturation.
Repeated decisions strengthen the penalty, while saturation bounds it.
The state summarises accumulated actions, not a probability of irrelevance.
It persists when a memory leaves the feedback list, allowing past competition
to influence its next retrieval without re-observing the original query.

\paragraph{Relative access can improve without observing the target.}
After a host-specific monotone affinity mapping and positive query-wise
scaling to $\hat\sigma(q,i)\in[0,1]$, AMIE uses
\begin{equation}
s_t(q,i)=\hat\sigma(q,i)\big(1-\lambda_p u_{i,t}\big),
\qquad \lambda_p\in[0,1].
\label{eq:score}
\end{equation}
The strength $\lambda_p$ bounds the score reduction. An unchanged memory can
rise in rank when its competitors are suppressed, even if the policy has
never observed that memory. \LinkProposition{prop:margin} formalises this
conditional rank property in \LinkAppendix{app:relative-access-proof}.
Selectivity matters because suppressing useful evidence can instead impair
its later access (\LinkAppendix{app:traffic-risk}).

\subsection{In-the-Flow Index Evolution}
\label{sec:method-inflow}

\paragraph{A shared wrapper makes access state independently controllable.}
AMIE attaches to stable memory IDs and the host's scored candidate interface,
separating the evolving access layer from content construction and candidate
generation:
\begin{equation}
\mathcal H:q\mapsto[(m_i,\sigma_{\mathcal H}(q,m_i))]_{i\in C_{\mathcal H}(q)},
\label{eq:host}
\end{equation}
and ranks the host-supplied candidate pool using the pre-update state:
\begin{equation}
\begin{aligned}
\mathcal W_{\mathbf u_t}(\mathcal H)(q)
 &=\operatorname{Top}_{k,\prec}\!\left[
 (m_i,\hat\sigma_{\mathcal H}(q,i)(1-\lambda_p u_{i,t}))
 \right]_{i\in C_{\mathcal H}(q)},\\
\mathcal W_{\mathbf u_t}(\mathcal H)(q)\big|_{\lambda_p=0}
 &=\operatorname{Select}_{k,\prec}(\mathcal H(q)).
\end{aligned}
\label{eq:wrapper}
\end{equation}
Both operators return ordered memory IDs at depth $k$, with the host's tie
order $\prec$. Setting $\lambda_p=0$ recovers the host's selection from the
same pool, enabling matched comparisons of the access layer. Monotone score
adaptation preserves the host order before state is applied; host-specific
attachment points are specified in \LinkAppendix{app:attachment}.

\paragraph{Post-sealing updates enforce the prospective adaptation target.}
For query $t$, the wrapper reads $\mathbf u_t$ and seals the answer context
$\mathcal S_t$ before committing the mask derived from native feedback
$\mathcal B_t$. Only later queries use $\mathbf u_{t+1}$, so an action's
effect is carried forward rather than credited to the answer that elicited
it. Native feedback also separates the update signal from AMIE's own
re-ranked context. Candidate-pool depth, feedback depth, and answer depth
therefore control reach, observation, and context respectively; only in-pool
memories can enter the answer context, and acquiring an expanded pool can
incur additional retrieval work. This interface makes the policy and state
replaceable while keeping the host's stored content and answerer fixed.

\section{Experiments}
\label{sec:experiments}

\subsection{Experimental Setup}
\label{sec:exp-setup}

\paragraph{Benchmarks.}
LoCoMo tests long-horizon conversational memory, GroupMem tests group memory
across channels, and MemBench tests personal memory
\citep{maharana2024locomo,groupmem2025,membench2025}.
We report F1 on LoCoMo and accuracy otherwise; retrieval units and evidence
mappings are defined in \LinkAppendix{app:metrics}, and dataset scopes in
\LinkTable{tab:dataset-stats}.

\paragraph{Memory Hosts.}
We evaluate BM25 and released Mem0, MemoryOS, A-MEM, and Memory-R2 systems,
covering lexical, dense, tiered, and linked retrieval interfaces.
\LinkTable{tab:attachment} specifies native interfaces and AMIE attachment points. All hosts share the same generator backbone: Qwen3-32B.

\paragraph{Index-Evolution Policies.}
Trained, frozen, and count share the persistent state and host wrapper,
differing in which feedback entries receive updates.
Trained and frozen use the same backbone; the former learns teacher masks and
the latter follows the mask prompt, while count requires no model calls.
Policies were fitted or selected on LoCoMo and fixed before GroupMem and
MemBench evaluation (\LinkAppendix{app:selected-policy}).

\paragraph{Implementation and Evaluation Protocol.}
On/off comparisons are configured to share content, candidate pools, query
order, context policy, the frozen answerer, and grading; each
AMIE update follows context sealing.
Pool, feedback, and output depths separately determine reach, update signal,
and answer context (\LinkAppendix{app:attachment}).
Main tables report host-level scores; controlled replays isolate state
effects, and paired intervals apply only to aligned per-question records
(\LinkAppendix{app:full-results}, \LinkAppendix{app:count}).

\subsection{Does Index Evolution Improve Memory Access and Downstream Utility?}
\label{sec:main}
\label{sec:exp-qwen3}

\begin{table}[t]
\caption{\textbf{Retrieval scores with and without AMIE} (\%, higher is better).
Subscripts subtract the displayed baseline; green/red intensity encodes
positive/negative magnitude, not significance.
Host-specific retrieval units are defined in \LinkAppendix{app:metrics}.}

\label{tab:retrieval-main}
\centering\footnotesize
\setlength{\tabcolsep}{1.8pt}
\renewcommand{\arraystretch}{1.08}
\begin{tabular*}{\linewidth}{@{\extracolsep{\fill}}llrrrr@{}}
\toprule
 & & & \multicolumn{3}{c}{\textbf{+AMIE}} \\
\cmidrule(l){4-6}
\textbf{Benchmark} & \textbf{Host} & \textbf{Host only} & \textbf{Trained} & \textbf{Frozen} & \textbf{Count} \\
\midrule
\multirow{5}{*}{\shortstack[l]{LoCoMo\\\citep{maharana2024locomo}}}
 & BM25~\citep{robertson2009bm25} & 51.94 & \amieCell{53.02}{+1.08} & \amieCell{53.21}{+1.27} & \amieCell{53.41}{+1.47} \\
 & Mem0~\citep{mem0} & 42.17 & \amieCell{44.42}{+2.25} & \amieCell{44.28}{+2.11} & \amieCell{45.03}{+2.86} \\
 & MemoryOS~\citep{kang2025memoryos} & 45.29 & \amieCell{45.47}{+0.18} & \amieCell{46.20}{+0.91} & \amieCell{45.90}{+0.61} \\
 & A-MEM~\citep{xu2025amem} & 63.27 & \amieCell{65.14}{+1.87} & \amieCell{64.73}{+1.46} & \amieCell{66.02}{+2.75} \\
 & Memory-R2~\citep{yan2026memoryr2faircreditassignment} & 65.92 & \amieCell{67.78}{+1.86} & \amieCell{66.57}{+0.65} & \amieCell{65.03}{-0.89} \\
\midrule
\multirow{5}{*}{\shortstack[l]{MemBench\\\citep{membench2025}}}
 & BM25~\citep{robertson2009bm25} & 83.94 & \amieCell{85.62}{+1.68} & \amieCell{84.71}{+0.77} & \amieCell{81.82}{-2.12} \\
 & Mem0~\citep{mem0} & 77.41 & \amieCell{79.26}{+1.85} & \amieCell{78.53}{+1.12} & \amieCell{79.17}{+1.76} \\
 & MemoryOS~\citep{kang2025memoryos} & 73.33 & \amieCell{75.18}{+1.85} & \amieCell{74.62}{+1.29} & \amieCell{74.01}{+0.68} \\
 & A-MEM~\citep{xu2025amem} & 85.83 & \amieCell{87.21}{+1.38} & \amieCell{86.74}{+0.91} & \amieCell{86.86}{+1.03} \\
 & Memory-R2~\citep{yan2026memoryr2faircreditassignment} & 79.36 & \amieCell{81.07}{+1.71} & \amieCell{80.42}{+1.06} & \amieCell{80.13}{+0.77} \\
\bottomrule
\end{tabular*}
\end{table}

\begin{table}[t]
\caption{\textbf{Downstream utility with and without AMIE} (\%, higher is better).
F1 is used for LoCoMo and accuracy for GroupMem and MemBench.
Subscripts and green/red shading encode signed percentage-point differences
from the displayed baselines, not significance; dashes mark inapplicable arms.}

\label{tab:paired-main}
\centering\footnotesize
\setlength{\tabcolsep}{1.8pt}
\renewcommand{\arraystretch}{1.08}
\begin{tabular*}{\linewidth}{@{\extracolsep{\fill}}llrrrr@{}}
\toprule
 & & & \multicolumn{3}{c}{\textbf{+AMIE}} \\
\cmidrule(l){4-6}
\textbf{Benchmark} & \textbf{Host} & \textbf{Host only} & \textbf{Trained} & \textbf{Frozen} & \textbf{Count} \\
\midrule
\multirow{6}{*}{\shortstack[l]{LoCoMo (F1)\\\citep{maharana2024locomo}}}
 & No memory & 0.00 & - & - & - \\
 & BM25~\citep{robertson2009bm25} & 17.73 & \amieCell{19.04}{+1.31} & \amieCell{18.40}{+0.67} & \amieCell{18.27}{+0.54} \\
 & Mem0~\citep{mem0} & 9.98 & \amieCell{12.44}{+2.46} & \amieCell{12.69}{+2.71} & \amieCell{12.85}{+2.87} \\
 & MemoryOS~\citep{kang2025memoryos} & 20.27 & \amieCell{20.97}{+0.70} & \amieCell{20.83}{+0.56} & \amieCell{20.77}{+0.50} \\
 & A-MEM~\citep{xu2025amem} & 16.54 & \amieCell{17.43}{+0.89} & \amieCell{17.17}{+0.63} & \amieCell{16.92}{+0.38} \\
 & Memory-R2~\citep{yan2026memoryr2faircreditassignment} & 23.73 & \amieCell{24.96}{+1.23} & \amieCell{24.26}{+0.53} & \amieCell{24.51}{+0.78} \\
\midrule
\multirow{6}{*}{\shortstack[l]{GroupMem (Acc.)\\\citep{groupmem2025}}}
 & No memory & 18.39 & - & - & - \\
 & BM25~\citep{robertson2009bm25} & 32.08 & \amieCell{33.72}{+1.64} & \amieCell{32.48}{+0.40} & \amieCell{31.01}{-1.07} \\
 & Mem0~\citep{mem0} & 20.79 & \amieCell{22.14}{+1.35} & \amieCell{21.64}{+0.85} & \amieCell{20.31}{-0.48} \\
 & MemoryOS~\citep{kang2025memoryos} & 35.72 & \amieCell{38.27}{+2.55} & \amieCell{37.18}{+1.46} & \amieCell{34.66}{-1.06} \\
 & A-MEM~\citep{xu2025amem} & 23.27 & \amieCell{25.91}{+2.64} & \amieCell{24.36}{+1.09} & \amieCell{23.84}{+0.57} \\
 & Memory-R2~\citep{yan2026memoryr2faircreditassignment} & 33.23 & \amieCell{34.81}{+1.58} & \amieCell{34.17}{+0.94} & \amieCell{33.62}{+0.39} \\
\midrule
\multirow{6}{*}{\shortstack[l]{MemBench (Acc.)\\\citep{membench2025}}}
 & No memory & 48.38 & - & - & - \\
 & BM25~\citep{robertson2009bm25} & 78.93 & \amieCell{80.66}{+1.73} & \amieCell{80.27}{+1.34} & \amieCell{78.29}{-0.64} \\
 & Mem0~\citep{mem0} & 79.17 & \amieCell{82.54}{+3.37} & \amieCell{81.83}{+2.66} & \amieCell{81.26}{+2.09} \\
 & MemoryOS~\citep{kang2025memoryos} & 74.17 & \amieCell{75.63}{+1.46} & \amieCell{74.84}{+0.67} & \amieCell{73.91}{-0.26} \\
 & A-MEM~\citep{xu2025amem} & 81.67 & \amieCell{82.62}{+0.95} & \amieCell{81.36}{-0.31} & \amieCell{81.48}{-0.19} \\
 & Memory-R2~\citep{yan2026memoryr2faircreditassignment} & 79.36 & \amieCell{81.74}{+2.38} & \amieCell{80.92}{+1.56} & \amieCell{79.83}{+0.47} \\
\bottomrule
\end{tabular*}
\end{table}

\paragraph{Index evolution improves downstream utility across heterogeneous hosts.}
Trained AMIE improves all $\amieUtilityWins{}$ benchmark--host utility
comparisons, spanning the three memory settings and five retrieval systems
(\LinkTable{tab:paired-main}, \LinkFigure{fig:ga-summary}).
Aligned LoCoMo Mem0 and MemoryOS records also yield positive paired intervals;
the other host comparisons are point estimates
(\LinkAppendix{app:full-results}).

\paragraph{Access gains extend to the tasks that memory serves.}
All $\amieRetrievalWins{}$ pairs with retrieval measurements improve both
retrieval and downstream utility under trained AMIE
(\LinkTable{tab:retrieval-main}, \LinkTable{tab:paired-main}).
The agreement concerns aggregate metrics, not question-level mediation.

\paragraph{The depth diagnostic locates larger access gains near the list head.}
The BM25 count replay shows larger recall improvements at shallow retrieval
depths and a smaller estimate when much more of the candidate pool is returned
(\LinkTable{tab:depth}).
Improved ranking quality corroborates this redistribution within the same
pool; depth-specific intervals and the MemoryOS comparison are in
\LinkAppendix{app:count}. The sweep measures retrieval, not downstream utility.

\subsection{What Does Persistent Access State Learn from Retrieval History?}
\label{sec:exp-why}
\label{sec:exp-mechanism}

\paragraph{Observed competitors can expose evidence the policy has never seen.}
Counting improves recall for unexposed gold evidence in the LoCoMo BM25
replay, while label-informed competitor injection independently demonstrates
the relative-access mechanism in \LinkProposition{prop:margin}
(\LinkFigure{tab:gold-unseen}, \LinkFigurePanel{fig:replay-one}{a}).
The target need not be observed or promoted: previously observed competitors
can be demoted around it, whereas exposure of the target itself introduces
suppression risk (\LinkAppendix{app:injection}, \LinkAppendix{app:exposure-map}).

\paragraph{The gain resides in persistent, item-specific access state.}
Permuting penalties or reversing their direction removes the BM25 count
advantage at matched mean score movement
(\LinkFigurePanel{fig:evolvability-main}{c}).
A state frozen after even a small fraction of past queries improves future
retrieval (\LinkTable{tab:sufficiency}), and category-matched state
outperforms swapped state over the same store
(\LinkFigurePanel{fig:replay-one}{b,c}).
Reset controls, the non-monotone history-length curve, and the less decisive
interleaved swap qualify these findings in \LinkAppendix{app:count},
\LinkAppendix{app:sufficiency}, and~\LinkAppendix{app:swap}.

\paragraph{Suppression reallocates access within the host's candidate pool.}
The gains arise by moving existing candidates into the returned context:
the largest count gains occur where missing evidence remains reachable
inside the pool (\LinkFigurePanel{fig:evolvability-main}{b}).
Candidate coverage therefore bounds this mechanism, and an admitted memory
can displace useful evidence; label-assisted headroom and oracle interventions
quantify these opportunities and trade-offs
(\LinkAppendix{app:evolvability}, \LinkAppendix{app:ceilings}).

\begin{figure}[t]
\centering
\includegraphics[width=\linewidth]{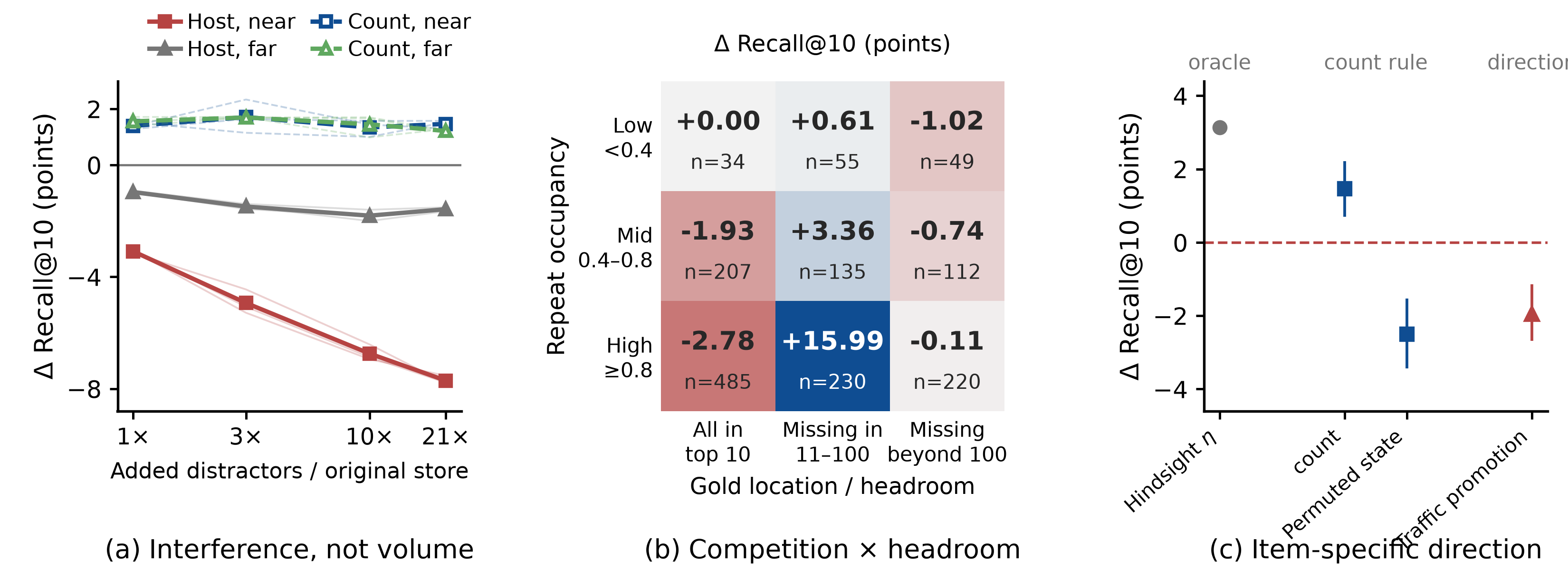}
\caption{\textbf{Competition and state interventions in the BM25 replay.}
Panels compare (a) added-store content, (b) competition and reachable headroom,
and (c) oracle and direction controls.
Recall-point differences cover $1{,}527$ questions, with conversation-cluster
$95\%$ intervals where records exist; baselines and controls are defined in
\LinkAppendix{app:count} and~\LinkAppendix{app:volume}.}
\label{fig:evolvability-main}
\end{figure}

\subsection{How Much Policy Selectivity Is Necessary?}
\label{sec:exp-count-default}

\paragraph{Retrieval traffic alone supplies a strong adaptation signal.}
Count approaches frozen-policy F1 on LoCoMo BM25 and MemoryOS without model
calls (\LinkTable{tab:paired-main}).
In the BM25 observation ablation, suppression rate tracks substantial
variation in retrieval gain ($r=0.79$), while differences between observation
regimes are smaller than within-regime seed variation
(\LinkTable{tab:student-grid}, \LinkAppendix{app:policy-ablations}).
The gain retained by a rate-matched random mask further supports accumulation
as a shared mechanism before semantic selection
(\LinkFigurePanel{tab:mechanism-controls}{a}).

\paragraph{Selectivity helps preserve repeatedly useful memories.}
MemoryOS exposure strata show that semantic selection trades some
unseen-evidence gain for less damage at high evidence exposure
(\LinkFigure{tab:gold-unseen}); individual strata retain uncertainty
(\LinkAppendix{app:exposure-map}).
At host level, trained selection exceeds count by $3.61$ points on GroupMem
MemoryOS, where count loses utility (\LinkTable{tab:paired-main}).
This demonstrates the practical value of selection; GroupMem recurrence was
not measured, so the replay does not identify the cause of that host's difference.

\paragraph{Learned selection has lower recorded latency than prompted selection.}
The student emits one mask per observation, with a median recorded policy
time of about $87$ ms in the Mem0 runs (\LinkTable{tab:costs}).
The frozen API policy takes longer in those logs, while count makes no model
calls; serving setups differ and timings exclude answer generation
(\LinkAppendix{app:reranker}).

\subsection{When Does Index Evolution Help?}
\label{sec:exp-boundary-main}
\label{sec:exp-scale}

\paragraph{Interference, rather than store volume alone, creates room for adaptation.}
Adding near-domain, far-domain, or duplicated content at matched volumes
produces different host degradation and count-policy effects in the BM25
replay (\LinkFigurePanel{fig:evolvability-main}{a}).
Confusable additions sustain competition, while self-duplication couples
competitors to useful evidence; complete volume and seed sweeps are in
\LinkAppendix{app:volume}.

\paragraph{Repeated competition and reachable headroom concentrate access gains.}
The largest BM25 count gain occurs when recurring competitors coexist with
missing evidence still inside the candidate pool
(\LinkFigurePanel{fig:evolvability-main}{b}).
The label-assisted strata in \LinkAppendix{app:evolvability} distinguish
this opportunity from already-satisfied retrieval or out-of-pool evidence;
occupancy alone correlates differently with gain on BM25 and MemoryOS
(\LinkAppendix{app:count}). These are replay diagnostics, not an unlabelled
online selection rule.

\paragraph{Coarse retrieval units can couple suppression to useful evidence.}
Re-indexing the same conversations into progressively coarser units lowers
evidence purity and reduces the count advantage toward zero
(\LinkTable{tab:granularity-ladder}).
The session-level interval spans zero, identifying separability as a risk
factor rather than a universal requirement for fine-grained storage
(\LinkAppendix{app:granularity-ladder}).

\paragraph{Retaining history and discounting stale actions are complementary.}
In the count replays, gradual decay improves recall over unlimited
accumulation at suitable half-lives on both hosts, while repeated resets
discard useful state (\LinkFigurePanel{tab:mechanism-controls}{b,c}).
These controls suggest preserving a useful history while limiting its
long-term suppression cost; decay is a diagnostic variant, not a component
of the main-table policies (\LinkAppendix{app:count}).

\paragraph{Deployment diagnostics distinguish structural reach from empirical risk.}
Future retrieval and candidate coverage delimit reach; recurrence,
granularity, and state lifetime are empirical risk factors
(\LinkTable{tab:conditions}). The strength sweep similarly identifies
beneficial settings and damage from aggressive suppression
(\LinkAppendix{app:lambda}), supporting deployment diagnostics rather than
universal conditions (\LinkAppendix{app:boundary}).

\section{Related Work}
\label{sec:related}

\subsection{Adaptive Agent Memory Systems}
Mem0, MemoryOS, and A-MEM extract, organise, and retrieve retained experience
\citep{mem0,kang2025memoryos,xu2025amem}; Memory-R1 and AtomMem learn memory
operations \citep{yan2025memoryr1,atommem2026}. Graph-structured variants
organise memories as linked entities and temporal knowledge graphs
\citep{graphrag,graphiti2024,gutierrez2024hipporag}, and multi-user settings
share memory under dynamic access control \citep{gmemory,collaborativememory}.
Persistent access adaptation also predates AMIE: MemoryBank updates
recall-dependent strength, Generative Agents maintains access recency, and
ExpGraph learns persistent node utilities within an evolving graph
\citep{zhong2024memorybank,park2023generative,expgraph2026}. Protecting
earlier behaviour under later adaptation is also the concern of continual
learning \citep{kirkpatrick2017ewc}; AMIE's analogue protects repeatedly
useful memories rather than model weights. Surveys taxonomise this design
space \citep{memsurvey2026,graphmemsurvey2026,guo2024massurvey}.
AMIE isolates index updates for separately controlled plug-in evaluation. \LinkAppendix{app:related-table}
maps the update targets of the compared systems.

\subsection{Adaptive Retrieval and Index Learning}
Retrieval-augmented generation couples a parametric reader with a
non-parametric index \citep{lewis2020rag,borgeaud2022retro,gao2023ragsurvey}.
Dense retrieval, cross-encoder reranking, and adaptive or agentic retrieval
improve which information a query receives
\citep{karpukhin2020dense,nogueira2019passage,asai2023selfrag,jiang2023flare,li2026beyond}.
Diversity- and fusion-based rerankers reorder a fixed candidate pool without
carrying state across queries \citep{carbonell1998mmr,cormack2009rrf}, while
index structures and learned indexes accelerate access without changing which
items are relevant \citep{malkov2018hnsw,jegou2011pq,kraska2018case}.
AMIE carries per-memory state from one retrieval event to later
queries, through the host's existing candidate interface. The distinction is
the independently controlled update and its prospective
effect on subsequent queries.

\subsection{Learning from Retrieval Feedback}
A complementary line optimises memory behaviour from deployment feedback.
Memory-R1 and Memory-R2 train memory operations with reinforcement learning
over task outcomes \citep{yan2025memoryr1,yan2026memoryr2faircreditassignment}; Mem-$\alpha$
learns memory construction via reinforcement learning \citep{wang2025memalpha};
indexed experience memory supports long-horizon agents through retrieval over
prior trajectories \citep{memexrl2026}; and proactive or unified managers
learn when to write, retrieve, or consolidate
\citep{agemem,promem2026,langmem2025}. Benchmarks increasingly probe such
behaviour through incremental multi-turn interactions
\citep{hu2025memoryagentbench}. AMIE also learns from deployment feedback, but
at a different target: the index state over content that remains fixed, with
teacher judgments distilled into a policy whose actions persist across
subsequent retrievals.

\section{Conclusion}
\label{sec:con}

AMIE establishes the memory index as a persistent adaptation layer distinct
from stored content, distilling interference judgments into selective actions
whose effects persist across subsequent host retrievals. Across heterogeneous
memory hosts, AMIE consistently improves downstream utility, while controlled
interventions show that suppressing persistent competitors can expose evidence
the policy has never observed. Strikingly, a zero-call count policy is already
competitive with semantic policies in several settings, while learned
selectivity extends the benefit where unconditional suppression harms useful
memories. Together, these results establish how memories are accessed as a
practical adaptation axis alongside what an agent chooses to retain.

\bibliography{iclr2027_conference}
\bibliographystyle{iclr2027_conference}

\clearpage

\appendix
\counterwithin{table}{section}
\counterwithin{figure}{section}
\renewcommand{\theHtable}{appendix.\thesection.\arabic{table}}
\renewcommand{\theHfigure}{appendix.\thesection.\arabic{figure}}

\section{Experiment}
\subsection{Datasets}
\label{app:datasets}

The evaluated benchmarks differ in whether and how a deployment revisits
the same memory store. \LinkTable{tab:dataset-stats} summarises them.

\begin{table}[htbp]
\caption{\textbf{Benchmark statistics and evaluation scope.}
Counts describe the benchmark-level utility and evidence-valid scopes,
not completed sample sizes for every host; retrieval requires resolvable
supporting-evidence identifiers.
Revisiting describes repeated access to a store, not a measured guarantee
of competitor recurrence.}
\label{tab:dataset-stats}
\centering\small
\setlength{\tabcolsep}{4pt}
\resizebox{\linewidth}{!}{%
\begin{tabular}{lrrlll}
\toprule
\textbf{Benchmark} & \textbf{Benchmark scope (util.\slash retr.)} & \textbf{Memory units} & \textbf{Setting} & \textbf{Metric} & \textbf{Revisiting} \\
\midrule
LoCoMo & 1,540 / 1,527 & 5,882 turns (10 conv.) & conversational & F1 & yes \\
GroupMem & 745 / -- & 120,000 messages (33 channels) & cross-channel workplace & Acc. (GPT-5) & yes \\
MemBench & 23,137 / 23,132 & ${\sim}1.2$M memories & personal & Acc. (MCQ) & history retrieval \\
\bottomrule
\end{tabular}}

\end{table}

\paragraph{LoCoMo~\citep{maharana2024locomo}.} Ten long-horizon conversations between two speakers,
stored as $5{,}882$ dialogue turns. We evaluate the $1{,}540$ answerable
category~1--4 questions and score retrieval on the $1{,}527$ whose nonempty
evidence identifiers resolve to source turns; the adversarial category is
excluded, as in the official protocol. Gold evidence is the set of annotated
supporting turns. Questions are presented in the released order within each
conversation, and the state persists across the whole conversation. Because
many questions share supporting turns, LoCoMo is the benchmark where earlier
traffic informs later queries, and also the one where uncontrolled evaluation
can credit an operator for memorising shared evidence, which is why we
partition by gold exposure throughout (\LinkFigure{tab:gold-unseen}).

\paragraph{GroupMem~\citep{groupmem2025}.} A group-conversation memory benchmark of $120{,}000$
workplace and technical messages across $33$ channels in four domains
(Finance, Technology, Manufacturing, Healthcare), with $745$ questions.
The benchmark releases no supporting-message ground truth and is evaluated
on downstream utility only. Each domain forms one shared store serving
$145$--$214$ questions. Recurrence was not measured, so the policy-dependent
scores do not identify why counting degrades on some hosts.

\paragraph{MemBench~\citep{membench2025}.} Each trace has one terminal
multiple-choice question. In the BM25 chronological adapter, state starts at
zero and resets per trace. Before ingesting an incoming message, the adapter
retrieves up to ten earlier occurrences using the incoming user text
(participation) or message text (observation), then updates their state.
Empty queries and empty prefixes produce no retrieval event. These are
explicit adapter-generated history queries, not retrieval events required
by the benchmark. Memory IDs are trace-scoped; repeated source IDs receive
occurrence suffixes. The terminal context uses only the accumulated history
state and is sealed before any terminal-query update. Of $23{,}137$ traces,
$23{,}132$ have resolvable gold source IDs. This documented adapter should
not be inferred from history ingestion alone or assumed for other hosts
without their event records.

\subsection{Metrics}
\label{app:metrics}

\paragraph{Retrieval.} Source-evidence recall averages
$|G_q\cap P(S_q)|/|G_q|$ over evidence-valid questions, where $G_q$ contains
distinct annotated source units and $P(S_q)$ projects returned entries to
those units. The returned unit and depth are separate from the scoring unit.
BM25 on LoCoMo returns ten turns. MemoryOS returns up to seven pages covering
at most fourteen turns in total; recall counts the union of covered source
turns before section truncation. A-MEM's root metric scores ten selected
note roots before linked expansion, where their source identities are known.
For BM25 MemBench, ten occurrence slots project to distinct original message
IDs: repeated IDs consume slots but contribute at most once to coverage.
Questions with empty gold or any unresolved gold ID are excluded from recall,
not from the state trajectory or utility evaluation. GroupMem has no
supporting-message gold and is evaluated only for utility.

For generated entries, source-evidence recall requires an explicit
entry-to-source projection, including one-to-many mappings and the treatment
of unmapped entries. The entry-to-source projection for the Mem0 and Memory-R2 scores in
\LinkTable{tab:retrieval-main} remains unspecified here; reproducing these
scores as source-evidence recall requires that mapping. This boundary does not
affect answer-based utility scoring.

\paragraph{Downstream utility.} LoCoMo uses the official category-specific
stemmed token F1; MemBench uses exact equality with the reference choice;
GroupMem accuracy uses its default GPT-5 judge. All answers within a benchmark are
produced by one frozen Qwen3-32B answerer under identical decoding, and the
no-memory control uses the same answerer with no supplied memories.

\paragraph{Uncertainty.} Main-table subscripts subtract the displayed
baseline; colour encodes signed magnitude. Paired intervals use per-question
differences and $10{,}000$ bootstrap draws, clustered by conversation for
LoCoMo and by question for MemBench. Question-order studies report $95\%$
$t$ intervals over permutations. Main-table cells without aligned
per-question outputs carry no significance claim.

\paragraph{Matched replay.} The mechanism studies reproduce the BM25
LoCoMo count pair in \LinkTable{tab:retrieval-main}, $51.94\to53.41$, over
$1{,}527$ evidence-valid questions in the released order. They vary state
or retrieval conditions while holding the baseline protocol fixed.
Its mean unrounded difference is $+1.48$ points, versus $+1.47$ from
subtracting the displayed scores. MemoryOS exposure analyses use the native
seven-page retrieval pair, $45.29\to45.90$; these are recall over source
turns covered by pages, not a ten-turn context.

\subsection{Memory System Hosts}
\label{app:memory-hosts}
The control category contains answer generation without supplied memory; BM25
is the transparent sparse retriever. For each released host, we preserve its
native store, candidate generator, score semantics, and expansion rules.
\begin{itemize}[leftmargin=*]
\item \textbf{Mem0}~\citep{mem0} extracts candidate facts from interactions and
uses memory operations to add, update, delete, or retain stored records before
retrieving them by semantic similarity.
\item \textbf{MemoryOS}~\citep{kang2025memoryos} organizes personalized memory
across short-, mid-, and long-term tiers and manages consolidation, updating,
and retrieval through its conversational interface.
\item \textbf{A-MEM}~\citep{xu2025amem} builds Zettelkasten-inspired structured
notes with generated attributes and links, then retrieves note roots with
native linked expansion.
\item \textbf{Memory-R2}~\citep{yan2026memoryr2faircreditassignment} jointly
trains fact extraction and memory management, using local and global
group-relative objectives to assign credit across long interaction horizons.
\end{itemize}
AMIE changes only persistent access weights over identifiable candidates
returned by these host pipelines. The current comparison includes Mem0,
MemoryOS, A-MEM, and Memory-R2 alongside BM25 on LoCoMo, GroupMem, and MemBench;
GroupMem has utility results only.

\subsection{Native indexes, and where AMIE attaches to them}
\label{app:attachment}

The documented adapters expose scored candidates with stable memory IDs.
The native feedback list is retained independently of re-ranking. When its
entries carry suppression, the wrapper can extend the ranking pool to at
most $100$ candidates, preserving the native list first as a tie anchor.
Mem0 and A-MEM request this larger pool through their search interfaces;
MemoryOS reuses pages from the same segment search without a second stateful
retrieval. Disabled-state checks reproduce native IDs and order, but do not
by themselves establish matched candidate-acquisition cost or a same-expanded-pool
no-op comparison. \LinkTable{tab:attachment} lists four documented
interfaces; \LinkFigure{fig:attachment} illustrates these examples.

The index has a different concrete representation in each host. BM25 uses
turn-level term frequencies and lengths to score query-term matches.
Mem0 stores extracted fact vectors with record and user identifiers;
our vector-only configuration retrieves user-filtered facts by squared
$L_2$ distance~\citep{mem0}. MemoryOS indexes topic-summary vectors and
page vectors within the selected mid-term segments, then merges retrieved
pages with short-term history and long-term profile/knowledge
context~\citep{kang2025memoryos}. Its released search uses normalized
inner products, with segment and page gates of $0.1$; the $0.6$ threshold
belongs to segment merging during construction, not query-time retrieval.
A-MEM embeds enriched notes (content, context, keywords and tags) and stores
links between notes~\citep{xu2025amem}. Our adapter preserves query
rewriting and cosine root selection, invokes the released linked-context
renderer on the selected roots, and then applies our $1{,}200$-word budget.
AMIE therefore changes access through persistent per-ID weights on these
candidate lists; it does not rebuild any host's underlying index.

\begin{figure}[t]
\centering

\resizebox{\linewidth}{!}{%
\begin{tikzpicture}[
  font=\fontsize{8}{9}\selectfont, >={Stealth[length=1.5mm]},
  nat/.style={draw=black!50, fill=black!6, rounded corners=2pt,
    align=center, inner sep=2pt, text width=#1, minimum height=10mm},
  wrap/.style={nat=#1, draw=blueC, fill=blueC!9, thick},
  state/.style={nat=#1, draw=redC, fill=redC!8, minimum height=7mm},
  note/.style={font=\fontsize{7}{8}\selectfont, text=black!70, align=center},
  flow/.style={->, line width=0.6pt, draw=black!65},
  fb/.style={->, line width=0.6pt, draw=redC, dashed},
]
\definecolor{blueC}{HTML}{0F4D92}
\definecolor{redC}{HTML}{B64342}
\node[anchor=west, inner sep=0pt] at (0,0.48)
  {\textcolor{black!70}{Grey: native index / retrieval}\quad
   \textcolor{blueC}{Blue: AMIE re-ranking}\quad
   \textcolor{redC}{Red: feedback / persistent state}};
\foreach \x/\y/\title/\unit/\depth in {
  0/0/{(a) BM25}/{turn}/10,
  8.3/0/{(b) Mem0}/{fact}/10,
  0/-5.8/{(c) MemoryOS}/{page}/7,
  8.3/-5.8/{(d) A-MEM}/{note}/10}{
  \begin{scope}[shift={(\x,\y)}]
    \draw[draw=black!55, dashed, rounded corners=4pt] (0,0) rectangle (7.9,-5.4);
    \node[anchor=west, font=\small\bfseries] at (0.18,-0.32) {\title};
    \node[wrap=14mm] (w) at (4.38,-2.65) {AMIE\\re-rank\\$k{=}\depth$};
    \node[state=20mm] (p) at (1.95,-4.05) {Policy $\pi$\\$(q,\mathrm{top}\text{-}k)$};
    \node[state=14mm] (u) at (4.38,-4.05) {\unit{} state\\$u_i$};
    \draw[fb] (p) -- node[above, note, text=redC] {mask} (u);
    \draw[fb] (u) -- node[right, note, text=redC, align=left] {next\\queries} (w);
  \end{scope}
}
\begin{scope}
  \node[nat=70mm, minimum height=11mm] (ai) at (3.95,-1.15)
    {\textbf{Index: terms $\mapsto$ turn IDs + term statistics}\\
     Tokenize turns; store term counts and turn lengths.};
  \node[nat=20mm] (ar) at (1.95,-2.65) {Query terms\\BM25 scores};
  \node[nat=19mm] (ao) at (6.55,-2.65) {10 turns\\answer context};
  \node at (0.35,-2.65) {$q$};
  \draw[flow] (0.57,-2.65) -- (ar);
  \draw[flow] (ai.south -| ar.north) -- (ar.north);
  \draw[flow] (ar) -- (3.60,-2.65);
  \draw[flow] (5.16,-2.65) -- (ao);
  \draw[fb] (ar) -- node[left, note, align=right] {native\\top-10} (1.95,-3.65);
  \node[note] at (3.95,-4.95) {AMIE reweights turn IDs; stored text and term statistics stay fixed.};
\end{scope}
\begin{scope}[shift={(8.3,0)}]
  \node[nat=70mm, minimum height=11mm] (bi) at (3.95,-1.15)
    {\textbf{Index: fact embeddings + fact IDs + user IDs}\\
     Extract facts; ADD / UPDATE / DELETE / NOOP.};
  \node[nat=20mm] (br) at (1.95,-2.65) {Embed query\\user-filtered\\$L_2$ search};
  \node[nat=19mm] (bo) at (6.55,-2.65) {10 facts\\answer context};
  \node at (0.35,-2.65) {$q$};
  \draw[flow] (0.57,-2.65) -- (br);
  \draw[flow] (bi.south -| br.north) -- (br.north);
  \draw[flow] (br) -- (3.60,-2.65);
  \draw[flow] (5.16,-2.65) -- (bo);
  \draw[fb] (br) -- node[left, note, align=right] {native\\top-10} (1.95,-3.65);
  \node[note] at (3.95,-4.95) {AMIE reweights fact IDs; vectors, records and user filters stay fixed.};
\end{scope}
\begin{scope}[shift={(0,-5.8)}]
  \node[nat=70mm, minimum height=11mm] (ci) at (3.95,-1.15)
    {\textbf{Index: topic-summary vectors $\to$ page vectors}\\
     Short-term FIFO $\to$ mid-term; heat-based tier updates.};
  \node[nat=20mm] (cr) at (1.95,-2.65) {Embed query\\match segments\\then pages};
  \node[nat=19mm] (co) at (6.55,-2.65) {7 pages\\context merge};
  \node at (0.35,-2.65) {$q$};
  \draw[flow] (0.57,-2.65) -- (cr);
  \draw[flow] (ci.south -| cr.north) -- (cr.north);
  \draw[flow] (cr) -- (3.60,-2.65);
  \draw[flow] (5.16,-2.65) -- (co);
  \draw[fb] (cr) -- node[left, note, align=right] {native\\top-7} (1.95,-3.65);
  \node[nat=59mm, minimum height=5mm, font=\scriptsize] (cb) at (3.45,-4.95)
    {Short-term history + long-term profile / retrieved knowledge};
  \draw[flow] (cb.east) -| (co.south);
\end{scope}
\begin{scope}[shift={(8.3,-5.8)}]
  \node[nat=70mm, minimum height=11mm] (di) at (3.95,-1.15)
    {\textbf{Index: enriched-note embeddings + note links}\\
     Generate context, keywords, tags; link and evolve notes.};
  \node[nat=20mm] (dr) at (1.95,-2.65) {Rewrite, embed\\cosine search};
  \node[nat=19mm, dashed] (dout) at (6.55,-2.65) {10 roots\\expand links\\answer context};
  \node at (0.35,-2.65) {$q$};
  \draw[flow] (0.57,-2.65) -- (dr);
  \draw[flow] (di.south -| dr.north) -- (dr.north);
  \draw[flow] (dr) -- (3.60,-2.65);
  \draw[flow] (5.16,-2.65) -- (dout);
  \draw[fb] (dr) -- node[left, note, align=right] {native\\top-10} (1.95,-3.65);
  \node[note, text width=71mm, text=redC] at (3.95,-4.95)
    {Root selection changes which linked notes enter the context;\\
     the 1,200-word cap is our evaluation budget.};
\end{scope}
\node[anchor=north, align=center, inner sep=0pt] at (8.1,-11.5)
  {\textcolor{blueC}{AMIE: $s_i=\hat\sigma_i(q)(1-\lambda_p u_i)$;
   re-sort up to 100 native candidates, return top-$k$.}\\
   \textcolor{redC}{Policy mask updates per-ID counts $n_i$;
   $u_i=n_i/(n_i+\kappa)$ persists across queries.}};
\end{tikzpicture}%
}

\caption{\textbf{Four native-index attachment examples.}
Grey modules retain host behaviour, blue modules reweight candidates, and red
feedback updates per-memory state for later queries.
The integration settings are specified in \LinkTable{tab:attachment}.}
\label{fig:attachment}
\end{figure}

\begin{table}[htbp]
\caption{\textbf{Host interfaces for persistent access updates.}
AMIE reweights the scored candidates at the listed attachment point;
subsequent native expansion can propagate a root's suppression to linked evidence.}
\label{tab:attachment}
\centering\footnotesize
\setlength{\tabcolsep}{3pt}
\renewcommand{\arraystretch}{1.15}
\begin{tabular}{@{}>{\raggedright\arraybackslash}p{.11\linewidth}>{\raggedright\arraybackslash}p{.24\linewidth}>{\raggedright\arraybackslash}p{.20\linewidth}>{\raggedright\arraybackslash}p{.21\linewidth}>{\raggedright\arraybackslash}p{.16\linewidth}@{}}
\toprule
\textbf{Host} & \textbf{Native index / unit} & \textbf{Score adapter} & \textbf{Attachment / depth} & \textbf{Following stage} \\
\midrule
BM25 & Inverted index / turn & Nonnegative BM25 & Scored turn list / $10$ & Turn delivery \\
Mem0 & Fact vectors and user filter / fact & Squared distance $d\mapsto1/(1+d)$ & Filtered fact list / $10$ & Fact delivery \\
MemoryOS & Topic and page vectors / page & Inner product; segment and page gates $0.1$ & Scored pages / $7$ & Merge history, profile, knowledge \\
A-MEM & Enriched-note vectors and links / root & Cosine $s\mapsto(1+s)/2$ & Root list / $10$ & Link expansion; $1{,}200$-word cap \\
\bottomrule
\end{tabular}
\end{table}

The attachment unit determines what one action can affect. Suppressing a
Mem0 fact changes access to that record; suppressing an A-MEM root can also
remove linked evidence reached through that root. The host's native score
calibration likewise sets how strongly a given weight changes its ranking.
These interface properties motivate selective updates without assigning a
common failure mechanism to the host-level scores.

\begin{table}[t]
\caption{\textbf{Policies sharing the AMIE action and state interface.}
Representative effects follow \LinkTable{tab:retrieval-main} and~\LinkTable{tab:paired-main};
serving costs follow \LinkTable{tab:costs}.}
\label{tab:arms}
\centering\small
\setlength{\tabcolsep}{4pt}
\resizebox{\linewidth}{!}{%
\begin{tabular}{llll}
\toprule
\textbf{Policy} & \textbf{Cost / query} & \textbf{Representative effect} & \textbf{Role in the model} \\
\midrule
Trained & $87$ ms & MemoryOS F1 $+0.70$; Mem0 F1 $+2.46$
 & Learned selective actions \\
Frozen & $24$ s & MemoryOS recall $+0.91$
 & Direct semantic mask inference \\
Count & $0$ model calls & Mem0 F1 $+2.87$; BM25 R@10 $+1.47$
 & Constant all-ones action mask \\
\bottomrule
\end{tabular}}
\end{table}

\subsection{Index Policies, Selection, and Exposure}
\label{app:selected-policy}

\paragraph{Shared interface.} After the answer context for query $t$ is
sealed, a policy returns a binary mask over the host's native feedback list
$\mathcal{B}_t$; only masked entries increment $n_i$. The count rule is the
all-ones mask. The frozen policy prompts Qwen/Qwen3-8B with the query and
feedback memories' text at temperature zero, without task-specific fine-tuning
(\LinkFigure{fig:prompt-mask}). The trained policy uses the same student
backbone with LoRA adapters and a ten-slot head, fitted by class-balanced
binary cross-entropy to masks generated by GPT-5.6-Sol under the same prompt. Every arm
shares the host, candidate-generation interface and pool cap, suppression form of
\LinkEquation{eq:score}, and the answerer.
The student's fitting loss does not directly optimise the prospective
utility criterion in \LinkEquation{eq:prospective}. 

\paragraph{Model and fitting details.}
The teacher is GPT-5.6-Sol with fixed parameters; the student and frozen
selector use Qwen/Qwen3-8B~\citep{yang2025qwen3}. The student trains LoRA
adapters and a linear head while freezing the backbone's original weights.
For feedback slot $j$ containing memory $m_i$, the final-token representation
$h_\phi(\mathcal O_t)$ gives
\begin{equation}
p_{\theta,i}(\mathcal O_t)=\operatorname{sigmoid}
 \big(\mathbf w_j^\top h_\phi(\mathcal O_t)+b_j\big),
\qquad a_{i,t}=\mathbf 1\{p_{\theta,i}(\mathcal O_t)\ge 0.5\}.
\label{eq:policy-head}
\end{equation}
The trainable parameters are the adapters $\phi$ and head $(\mathbf W,\mathbf b)$.
The observation numbers memories in native retrieval order and clips the
query to $300$ characters and each memory to $200$ characters before applying
the chat template. The ten-slot linear head reads the final-token decoder
state; only the first $b_t$ logits are used for $0\leq b_t\leq10$.
Unused slots are excluded from the loss and actions; feedback longer than ten
is rejected rather than truncated or grouped. Overlength token sequences
are also rejected after the stated text clipping. LoRA uses rank
$8$, scale $16$, and zero dropout on the attention query and value projections.
AdamW fits the adapters and head at learning rate $10^{-5}$, batch size $1$,
zero weight decay, and gradient clipping at $1$, for two shuffled epochs.
The context limit is $4096$ tokens and the action threshold is $0.5$.
Class weights are $w_c=N_{\mathrm{fit}}/(2N_c)$, where $N_c$ counts fitting
slots with label $c$ and $N_{\mathrm{fit}}=N_0+N_1$.
The binary loss is $\ell_{\mathrm{BCE}}(p,y)=-y\log p-(1-y)\log(1-p)$;
empty feedback contributes neither loss nor updates. The fitting set contains
$3676$ keep and $5344$ suppress slots, giving
weights $1.2269$ and $0.8439$ in \LinkEquation{eq:policy-imitation}.
We use each seed's final checkpoint, with no validation-based early stopping.

\paragraph{Selection.} The frozen Qwen/Qwen3-8B policy and the
class-balanced Qwen/Qwen3-8B student were selected on a $638$-question LoCoMo
scope and then fixed across hosts and benchmarks. Student fitting used
$902$ observations from six conversations; trained results average the
three prespecified seeds. The teacher-mask objective fits the action
interface, while later-query utility evaluates the resulting access state.

\paragraph{Exposure status.}
\label{app:result-provenance}
LoCoMo supplied fitting and selection data.
Selected policies and suppression strengths were fixed before the reported
MemBench and GroupMem evaluations. These benchmarks supplied no fitting
traces; their deployment retrievals update state, and their evaluation labels
score outputs without entering runtime observations. This is transfer of the
fixed policy, not of memory-ID state between stores. Paired intervals describe variation within the evaluated scope;
LoCoMo results therefore do not constitute independent generalisation evidence.

\subsection{Policy and Teacher Prompts}
\label{app:prompts}

The frozen Qwen/Qwen3-8B policy and the GPT-5.6-Sol teacher
share one mask prompt (\LinkFigure{fig:prompt-mask}). The system message fixes
the decision criterion; the user message carries the current query and the
native feedback list, with the query clipped to $300$ characters and each
memory to $200$; a reply that does not parse is retried with a format
reminder. The teacher generates masks offline at temperature zero using
this prompt. These input--mask pairs fine-tune the Qwen/Qwen3-8B student;
the frozen comparator uses the prompt directly at temperature zero without
this fine-tuning.

\begin{figure}[!t]
\centering
\begin{minipage}{\linewidth}
\begin{promptbox}
System: You maintain the access index of an agent's long-term memory. A query was issued and the retriever returned the memories below. Decide, for each, whether it was retrieved DESERVEDLY (it bears on the query) or whether it INTRUDED (it was surfaced by lexical accident, generic phrasing or topical adjacency and does not bear on the query). Your verdict does not reorder these results. It is written into a persistent penalty that affects which memories this agent can reach for ALL FUTURE queries, so judge the memory's tendency to intrude rather than its rank here, and prefer 'noop' when unsure: suppressing a memory that merely ranked low costs future access to it. Reply with JSON only: {"verdicts": [{"id": <int>, "a": "suppress"|"noop"}, ...]} listing only the memories you judge to have intruded, plus nothing else. An empty list is a valid and often correct answer.

User: QUERY: {query}

RETRIEVED:
0. {feedback memory 0}
1. {feedback memory 1}
...

Format reminder appended after an unparseable reply: Return only JSON verdicts with unique integer IDs in {0, ..., k-1} and actions suppress/noop. Empty verdicts are valid.
\end{promptbox}
\caption{\textbf{Mask prompt} shared by frozen Qwen/Qwen3-8B and the
GPT-5.6-Sol teacher that supplies student training targets. Runtime verdicts update the persistent counts in \LinkEquation{eq:masked-update}; they
do not reorder the current list.}
\label{fig:prompt-mask}
\end{minipage}
\end{figure}

\subsection{Prospective Value of a Policy Update}
\label{app:prospective-value}
\label{app:prospective}

\paragraph{Prospective criterion.}
An update made at step $t$ produces $\mathcal{I}_{t+1}$, and its value is what
$\mathcal{I}_{t+1}$ does for the queries that follow, relative to leaving the
index alone. For a fixed index, write
$V(q;\mathcal{I})=\mathcal{U}(q,\mathcal{A}(\cdot\mid q,
\mathcal{R}(q;\mathcal{C},\mathcal{I})))$ for downstream task utility.
\begin{equation}
\mathrm{Val}(\Delta\mathcal{I}_t) \;=\; \mathbb{E}_t\Big[\textstyle\sum_{\tau=t+1}^{t+H}
\big(V(q_\tau;\mathcal{I}_{t+1})-V(q_\tau;\mathcal{I}_{t})\big)\Big],
\qquad
\mathcal{I}_{t+1}=F(\mathcal{I}_t,\mathcal{O}_t),
\label{eq:prospective}
\end{equation}
Here the expectation is over future queries conditional on information
available at step $t$. Both branches keep their index states fixed,
so this criterion isolates the prospective value of one update; it is not a
full trajectory objective with subsequent policy actions. This criterion
defines what makes an update useful; it supplies no future information to
the runtime policy. Teacher-mask imitation supplies a tractable learning
objective, but does not directly optimise this prospective criterion.

\subsection{Store, Retriever, and Score Normalisation}
\label{app:store}

AMIE does not modify stored content; its policy can read the retrieved
memory text. Host construction uses each system's native content procedure.

The adapters in \LinkAppendix{app:attachment} first map native scores to
nonnegative affinities (for example, inverse squared distance for Mem0 and
shifted cosine for A-MEM), then apply positive query-wise scaling. The
mapping preserves native preference; it is not itself necessarily a
proportional normalisation. The selection identity in
\LinkEquation{eq:wrapper} concerns IDs and tie order under the same pool.
Granularity interventions group turns within the BM25 replay
(\LinkAppendix{app:granularity-ladder}).

\paragraph{State-maintenance complexity.}
After sealing the answer context, the wrapper updates the state of the
native feedback memories, $\mathcal{A}_t=\{i:m_i\in\mathcal{B}_t\}$,
with $b_t=|\mathcal{B}_t|$. Thus
\begin{equation}
T_{\mathrm{update}}=O(b_t),
\qquad
M_{\mathrm{state}}=O(|\mathcal{C}|)\ \text{dense, or }
O\Big(\big|\textstyle\bigcup_{\tau<t}\mathcal{B}_\tau\big|\Big)\ \text{sparse}.
\label{eq:complexity}
\end{equation}
These bounds cover count-state maintenance. The bound on $u_i$ is per-memory
suppression magnitude, not a bound on accumulated counts or total storage.
Candidate acquisition and re-ranking have separate costs, and trained/frozen
policies add inference cost.

\subsection{Retrieval traffic as an interference signal}
\label{app:traffic-risk}
\paragraph{Why retrieval traffic grounds the policy.}
The native retrieval event identifies memories that compete for limited
slots. Its empirical frequency is
\begin{equation}
\rho_{i,t}=\frac{1}{t-1}\sum_{\tau<t}\mathbf{1}[m_i\in\mathcal{B}_\tau],
\qquad t>1.
\label{eq:occupancy}
\end{equation}
This measures exposure, not harm: retrieving a frequently useful memory
need not be an intrusion. Let $Z_i(q)=1$ indicate that a retrieved memory
does not answer $q$. Under a deployment query distribution, interference
risk decomposes as
\begin{equation}
\begin{aligned}
\mathrm{Risk}(m_i)
&:=\Pr[m_i\in\mathcal{R}(q)\ \wedge\ Z_i(q)=1]\\
&=\underbrace{\Pr[m_i\in\mathcal{R}(q)]}_{\text{exposure}}
\,\underbrace{\Pr[Z_i(q)=1\mid m_i\in\mathcal{R}(q)]}_{\text{conditional interference}}.
\end{aligned}
\label{eq:risk}
\end{equation}
The count special case uses exposure alone, with $n_{i,t}=(t-1)\rho_{i,t}$.
Frozen and trained policies condition on observed retrieval context for
selective suppression. Their masks remain
proxies for harmful competition; neither imitation nor exposure alone
guarantees improved utility.

\section{Detailed Results}
\label{app:full-results}

\LinkTable{tab:retrieval-main} and~\LinkTable{tab:paired-main} are the sole source
for host-comparison scores. \LinkTable{tab:detail-locomo} adds uncertainty only
for selected pairs whose stored predictions reproduce those displayed scores;
the matched replay studies address the mechanism of persistent access.

\begin{table}[t]
\caption{\textbf{Reproducible LoCoMo utility pairs.} Scores and displayed
$\Delta$ follow \LinkTable{tab:paired-main}; intervals use the unrounded
per-question differences and conversation-cluster $95\%$ bootstrap.}
\label{tab:detail-locomo}
\centering\small
\begin{tabular}{llrrrr}
\toprule
Host & Policy & w/o AMIE & +AMIE & $\Delta$ & 95\% CI \\
\midrule
\multirow{3}{*}{Mem0} & trained & 9.98 & 12.44 & +2.46 & $[+1.60,+3.24]$ \\
 & frozen & 9.98 & 12.69 & +2.71 & $[+1.73,+3.53]$ \\
 & count & 9.98 & 12.85 & +2.87 & $[+1.95,+3.80]$ \\
\midrule
\multirow{3}{*}{MemoryOS} & trained & 20.27 & 20.97 & +0.70 & $[+0.29,+1.14]$ \\
 & frozen & 20.27 & 20.83 & +0.56 & $[+0.10,+1.01]$ \\
 & count & 20.27 & 20.77 & +0.50 & $[-0.13,+1.05]$ \\
\bottomrule
\end{tabular}
\end{table}

\section{Mechanism Controls}
\label{app:count}

\paragraph{Control-figure protocols.}
\LinkFigure{tab:mechanism-controls} uses the $1{,}527$-question BM25 replay;
per-question effects use conversation-cluster $95\%$ intervals, while the
fifty-order comparisons use paired $t$ intervals over permutations.
Missing per-question records are shown as points without intervals.
\LinkFigure{tab:gold-unseen} has BM25 bucket counts $542/587/354/44$ and
MemoryOS counts $541/470/416/100$; exposure counts earlier appearances of
the question's gold among native candidates. Both protocols retain the
same question order within a comparison.
The recurrence intervention injects counts on $1{,}504$ adjacent question
pairs within this replay. These label-informed interventions do not
supply oracle information to the online policy.

\paragraph{Control definitions.}
Hindsight $\eta$ (h-$\eta$) sets each memory's state to the fraction of its
native-feedback appearances that are non-gold, using the full conversation.
Hindsight $\rho\eta$ counts those non-gold appearances; online variants use
only preceding observations. Hindsight count uses full-stream exposure
counts, while permuted state shuffles memory-specific weights.
Traffic promotion (tr-p) reverses the sign of count-based score changes;
label suppression (lb-s) and label promotion (lb-p) instead target entries
labelled as gold in preceding feedback. The direction controls scale their
strength to match the count rule's mean absolute score change on returned
items, subject to the recorded strength cap. They are diagnostic
interventions, not deployable policies. Windows retain only the preceding
$W$ feedback events; decay discounts counts with half-life $h$. Their gains
are relative to the unmodified host. Reset effects compare clearing the
state at the stated stream fractions with uninterrupted accumulation under
the same question order.

\begin{table}[t]
\centering\footnotesize
\setlength{\tabcolsep}{3pt}
\begin{minipage}[t]{0.47\linewidth}
\centering
\caption{\textbf{Fresh and reinforced evidence in the LoCoMo replay.}
A question is reinforced when its gold overlaps an earlier question's gold.
Values are recall-point differences from the same host.}
\label{tab:fresh-reinforced}
\begin{tabular}{llrr}
\toprule
\textbf{Host} & \textbf{Arm} & \textbf{fresh} & \textbf{reinforced} \\
 & & ($n{=}788$) & ($n{=}739$) \\
\midrule
\multirow{3}{*}{BM25} & count & $\mathbf{+2.06}$ & $+0.86$ \\
 & frozen & $+1.14$ & $\mathbf{+1.41}$ \\
 & trained & $+0.86$ & $\mathbf{+1.31}$ \\
\midrule
\multirow{3}{*}{MemoryOS} & count & $+0.76$ & $+0.44$ \\
 & frozen & $\mathbf{+0.97}$ & $+0.85$ \\
 & trained & $-0.00$ & $+0.37$ \\
\bottomrule
\end{tabular}

{\footnotesize Bold: conversation-cluster $95\%$ interval excludes zero
(count fresh $[+0.89,+3.23]$; frozen reinforced $[+0.21,+2.55]$; trained
reinforced $[+0.42,+2.10]$; frozen fresh on MemoryOS $[+0.12,+1.80]$).\par}
\end{minipage}\hfill
\begin{minipage}[t]{0.50\linewidth}
\centering
\caption{\textbf{Count effects across retrieval depths on BM25 and MemoryOS.}
Depth $k$ counts turns for BM25 and pages for MemoryOS; recall scores source
turns. Intervals are conversation-cluster $95\%$: the first BM25 interval
including zero is at $k=50$, while all MemoryOS intervals include zero.}
\label{tab:depth}
\resizebox{\linewidth}{!}{%
\begin{tabular}{lrr}
\toprule
$k$ & \textbf{BM25} & \textbf{MemoryOS} \\
\midrule
$3$ & $+2.1$ $[+0.8,+3.2]$ & $+1.2$ $[-0.4,+2.6]$ \\
$5$ & $+2.7$ $[+1.6,+3.9]$ & $+0.6$ $[-1.1,+2.2]$ \\
$7$ & - & $+0.6$ $[-0.5,+1.9]$ \\
$10$ & $+1.5$ $[+0.7,+2.2]$ & capped at $7$ \\
$20$ & $+1.7$ $[+0.7,+2.8]$ & - \\
$50$ & $+0.4$ $[-0.0,+0.9]$ & - \\
\midrule
nDCG@10 & $0.384\to0.409$ & - \\
 & $+2.5$ $[+1.7,+3.2]$ & \\
\bottomrule
\end{tabular}}
\end{minipage}
\end{table}

\begin{figure}[t]
\centering
\includegraphics[width=\linewidth]{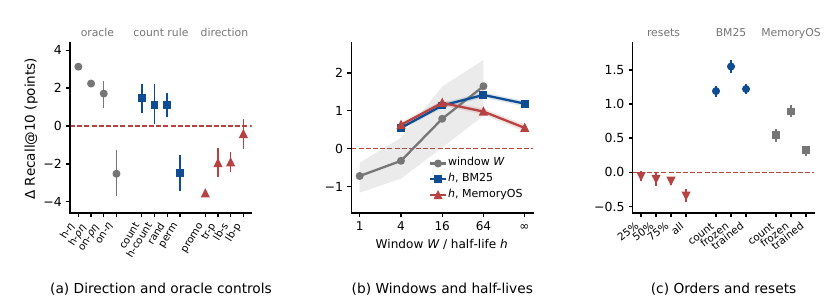}
\caption{\textbf{Controlled state interventions on BM25 and MemoryOS.}
(a) BM25 oracle and direction controls; (b) windows and decay; (c) resets
and order comparisons. Recall-point differences use conversation-cluster
$95\%$ intervals for per-question comparisons and paired $t$ intervals
for fifty-order comparisons; missing intervals are omitted.
\LinkAppendix{app:count} defines the controls and their baselines.}
\label{tab:mechanism-controls}
\end{figure}

\begin{figure}[t]
\centering
\includegraphics[width=\linewidth]{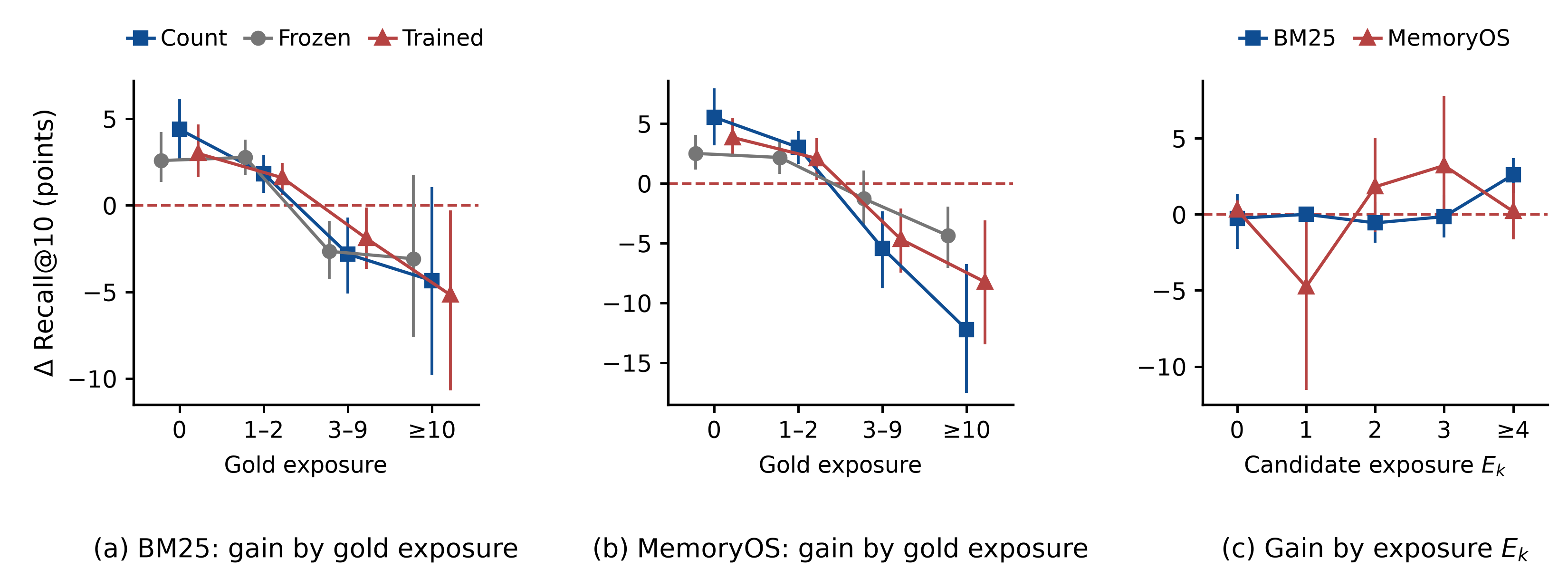}
\caption{\textbf{Policy effects depend on evidence exposure in the LoCoMo replay.}
Panels (a,b) stratify BM25 and MemoryOS by prior gold exposure; (c) stratifies
count effects by candidate exposure.
Whiskers show conversation-cluster $95\%$ intervals for recall-point differences.}
\label{tab:gold-unseen}
\label{tab:ek-buckets}
\end{figure}

\paragraph{Where the two hosts differ} (\LinkFigure{tab:gold-unseen}). The
frozen mask trades rather than dominates: it gives up part of the never-seen gain (paired count minus frozen
$+1.8$ $[+0.6,+3.1]$ on BM25 and $+3.0$ $[+1.5,+4.8]$ on MemoryOS) and
recovers it at high exposure ($-4.1$ $[-7.3,-1.0]$ and $-7.9$ $[-11.8,-2.8]$
on the two highest MemoryOS buckets), netting $+0.2$ for counting on BM25 and
$+0.3$ for the mask on MemoryOS. A conditional-harm signal moves the cost of
suppression; it does not remove it.

\paragraph{Depth, ranking quality, and reachable headroom}
(\LinkTable{tab:depth}). The same replay
scored at depths $k\in\{3,5,10,20,50\}$ gives $+2.1$ $[+0.8,+3.2]$, $+2.7$
$[+1.6,+3.9]$, $+1.5$ $[+0.7,+2.2]$, $+1.7$ $[+0.7,+2.8]$ and $+0.4$
$[-0.0,+0.9]$: a head-of-list intervention whose interval first touches zero
as the returned context includes more of the candidate pool. Ranking quality improves
throughout (nDCG@10 $0.384\to0.409$, $+2.5$ $[+1.7,+3.2]$). Reachable
headroom (gold outside the top ten but inside the top hundred) is $27.5\%$
on BM25 and $22.8\%$ on MemoryOS and correlates with the per-conversation gain
(Spearman $0.41$ and $0.32$) while being nearly independent of repeat
occupancy (Spearman $-0.18$ and $-0.06$).

\paragraph{Repeat occupancy tracks different trade-offs within each host.}
The share of BM25's top ten already retrieved by earlier questions spans
$0.67$--$0.79$ across conversations and correlates with per-conversation gain
at Pearson $+0.70$ (leave-one-out $0.61$--$0.79$). The MemoryOS correlation
is negative and unstable. Occupancy therefore describes retrieved traffic;
evidence headroom is needed to interpret the opportunities it creates.

\paragraph{Controlled recurrence.} Intervening
directly on $1{,}504$ adjacent question pairs isolates the trade the state
makes. Suppressing a question's
competitors helps that question ($+2.00$ to $+5.29$ points as intensity
grows) and costs the next one ($-0.19$ to $-1.03$), while suppressing its
useful memories is far more damaging ($-6.27$ to $-29.85$). This asymmetry is consistent with the benefit of decay relative to
unbounded accumulation and of persistence relative to per-query resets (\LinkFigure{tab:mechanism-controls}).

\section{Replay Studies of the Index State}
\label{app:replay-studies}

The studies in this section share one matched replay: the main-table BM25
protocol on the $1{,}527$ evidence-valid LoCoMo questions, released order,
$\kappa{=}3$, $\lambda_p{=}0.5$, a $100$-candidate pool, and the all-ones
mask. Every script reproduces the published arms ($51.94\to53.41$) before it
scores a variant, every plan is sealed before scoring, and every interval is
a conversation-cluster bootstrap ($10{,}000$ draws). No model is called.

\begin{figure}[t]
\centering
\includegraphics[width=\linewidth]{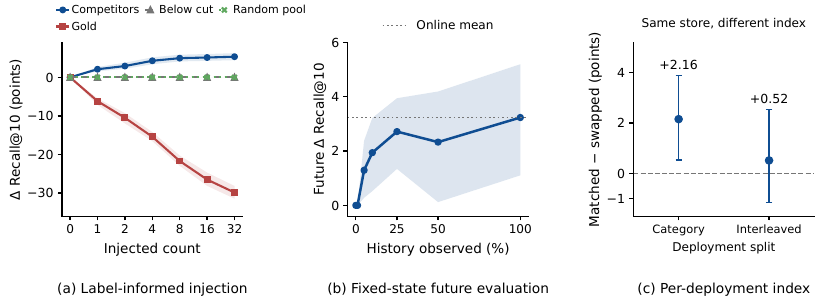}
\begin{subcaptiongroup}
\phantomsubcaption\label{fig:injection}
\phantomsubcaption\label{fig:sufficiency}
\phantomsubcaption\label{fig:index-swap}
\end{subcaptiongroup}
\caption{\textbf{Persistent state interventions in the BM25 replay.}
Panels show (a) label-informed count injection on $1{,}527$ questions,
(b) frozen-state evaluation on $386$ future questions, and (c) matched-minus-swapped
state on $381$ category-split and $387$ interleaved future questions.
Bands and whiskers are conversation-cluster $95\%$ intervals on recall-point
differences; injection is a diagnostic intervention, not a runtime policy.}
\label{fig:replay-one}
\end{figure}

\subsection{Access history alone changes what is retrievable}
\label{app:injection}

The query, host scores, and gold labels are matched, and only the
access history of one target set is changed: the operator starts from an
empty state and the count of every entry in the target set is set to $n$
before the question is scored. \LinkTable{tab:injection} and \LinkFigure{fig:injection} report the
four target sets. Setting the competitors' history alone raises evidence
Recall@10 monotonically from $51.94$ to $57.29$, with $118$ questions
improving and none worsening at $n{=}32$; setting the same history on the
gold entries costs up to $29.9$ points; setting it on entries that already sit
below the cut, or on random pool entries of the same score mass, changes
nothing at any level. The same content therefore admits a $35$-point range of
recall depending only on which memories the past retrieved, which is the
separation in \LinkSection{sec:formulation}: identical content supports different
retrieval outcomes under different access states.

The natural online state sits between the two curves. Under the released
order the competitors of a question carry a median count of $2$ ($36.5\%$
reach $4$), but so do the gold entries themselves (median $1$, $16\%$ reach
$4$), and the online gain of $+1.48$ is the net of the two: the rule earns
where it demotes competitors and pays where the traffic has also counted
the evidence, consistent with the exposure-stratified results of \LinkSection{sec:exp-why}.

\begin{table}[htbp]
\caption{\textbf{Causal traffic injection} ($\Delta$Recall@10 against the
same question with an empty state, conversation-cluster $95\%$ intervals,
$n{=}1{,}527$). Only the count of the target set is changed; query,
host scores, gold and store are identical across columns. Under the online
rule the competitors' natural median count is $2$ and the gold entries' is
$1$.}
\label{tab:injection}
\centering\small
\setlength{\tabcolsep}{4pt}
\begin{tabular}{rllll}
\toprule
$n$ & \textbf{competitors (non-gold top ten)} & \textbf{gold entries of the top ten} & \textbf{ranks 11--20} & \textbf{random pool} \\
\midrule
$1$  & $+2.10$ $[+1.57,+2.68]$ & $-6.17$ $[-7.12,-5.43]$ & $0.00$ & $0.00$ \\
$2$  & $+2.93$ $[+2.20,+3.80]$ & $-10.49$ $[-11.94,-9.28]$ & $0.00$ & $0.00$ \\
$4$  & $+4.29$ $[+3.40,+5.19]$ & $-15.40$ $[-16.53,-14.19]$ & $0.00$ & $0.00$ \\
$8$  & $+4.97$ $[+4.13,+5.87]$ & $-21.69$ $[-22.94,-20.26]$ & $0.00$ & $0.00$ \\
$16$ & $+5.14$ $[+4.28,+6.05]$ & $-26.51$ $[-28.13,-24.69]$ & $0.00$ & $0.00$ \\
$32$ & $+5.35$ $[+4.45,+6.28]$ & $-29.86$ $[-31.45,-28.16]$ & $0.00$ & $0.00$ \\
\midrule
\multicolumn{5}{l}{online count rule on the same questions: $+1.48$ $[+0.70,+2.21]$} \\
\bottomrule
\end{tabular}
\end{table}

\subsection{How much traffic the state needs}
\label{app:sufficiency}

Each conversation's stream is split into an accumulation segment (the first
$75\%$ of questions) and a fixed future set (the last $25\%$, $386$
evidence-valid questions, off recall $65.67$). The state accumulates over
the first fraction $f$ of the accumulation segment, is frozen, and every arm
is scored on the identical future set (\LinkTable{tab:sufficiency},
\LinkFigure{fig:sufficiency}). Ten
observed questions yield a zero point estimate; $59$ in total give $+1.30$ $[+0.28,+2.38]$,
$288$ give $+2.72$ $[+1.34,+3.95]$, and the full segment gives $+3.24$
$[+1.11,+5.20]$, the same value the fully online arm reaches on that future
set. The $59$ observations are pooled across ten conversations ($3$--$7$ per
conversation), not $59$ per conversation. The point estimates suggest early
benefits but are not monotone in observed traffic.

The online arm and the frozen full-history arm agree to three decimals
although their top-ten lists differ on $306$ of the $386$ future questions:
the traffic that arrives inside the future set reorders entries below the
gold but changes gold recall on only four questions, and those changes
cancel. On this stream the state built before the future set already
contains the competitors that future set contests, which is what the
persistence results of \LinkSection{sec:exp-why} would predict.

\begin{table}[htbp]
\caption{\textbf{Traffic sufficiency} on the fixed future set ($386$
questions; $\Delta$Recall@10 against the unmodified host, conversation-cluster
$95\%$ intervals). The state is frozen after observing the first fraction
$f$ of each conversation's accumulation segment; the online row keeps
updating inside the future set.}
\label{tab:sufficiency}
\centering\small
\begin{tabular}{rrrl}
\toprule
$f$ & \textbf{questions observed (total)} & \textbf{Recall@10} & $\Delta$ \textbf{[95\% CI]} \\
\midrule
$0$ & $0$ & $65.67$ & $0.00$ \\
$0.01$ & $10$ & $65.67$ & $0.00$ \\
$0.05$ & $59$ & $66.97$ & $+1.30$ $[+0.28,+2.38]$ \\
$0.10$ & $114$ & $67.62$ & $+1.94$ $[+0.53,+3.22]$ \\
$0.25$ & $288$ & $68.39$ & $+2.72$ $[+1.34,+3.95]$ \\
$0.50$ & $576$ & $68.01$ & $+2.33$ $[+0.12,+4.20]$ \\
$1.00$ & $1{,}154$ & $68.91$ & $+3.24$ $[+1.11,+5.20]$ \\
\midrule
online & all & $68.91$ & $+3.24$ $[+1.01,+5.33]$ \\
\bottomrule
\end{tabular}
\end{table}

\subsection{Policy selectivity across competitor and gold exposure}
\label{app:exposure-map}

The saved per-question deltas of the three replay policies are binned by two
exposure indices recomputed in released order: $E_k$, the number of native
top-ten candidates previously retrieved, and the gold exposure of
\LinkFigure{tab:gold-unseen}. \LinkTable{tab:exposure-map} and \LinkFigure{fig:exposure-map} report
the high-competition row, which holds $1{,}389/1{,}527$ ($91.0\%$) of the questions, and the pooled
low-competition cells. Where competition has been observed and the evidence
has not, counting beats the semantic mask by $+2.22$ $[+0.79,+3.90]$; as
soon as the evidence itself has been retrieved the sign turns, with the mask
ahead by $0.89$ at one or two exposures (interval touching zero) and less
harmful at ten or more. Where competition has not been observed, the intervals for all three arms include zero; this does not establish
equivalence to the host or absence of an effect.

These label-assisted analyses diagnose when suppression helps or harms.
Counting gives the larger point estimate on unexposed evidence, whereas
selectivity can reduce damage to evidence that earlier traffic retrieved.
Turning the diagnosis into an online decision rule requires observable
proxies for gold exposure and reachable headroom; neither quantity is
available from unlabelled retrieval traffic alone.

\begin{table}[htbp]
\caption{\textbf{Count against the semantic arms across competitor and gold
exposure} (BM25 replay, $\Delta$Recall@10 against the host, conversation-cluster
$95\%$ intervals on the paired count-minus-frozen difference). $E_k$ is the
number of the host's ten candidates earlier traffic had already retrieved;
gold exposure is the number of earlier appearances of the question's own
gold turns.}
\label{tab:exposure-map}
\centering\small
\setlength{\tabcolsep}{4pt}
\begin{tabular}{llrrrrl}
\toprule
$E_k$ & \textbf{gold exposure} & $n$ & \textbf{count} & \textbf{frozen} & \textbf{trained} & \textbf{count $-$ frozen [95\% CI]} \\
\midrule
$\geq 4$ & $0$ & $439$ & $+5.33$ & $+3.11$ & $+3.65$ & $+2.22$ $[+0.79,+3.90]$ \\
$\geq 4$ & $1$--$2$ & $559$ & $+2.00$ & $+2.90$ & $+1.71$ & $-0.89$ $[-1.82,+0.00]$ \\
$\geq 4$ & $3$--$9$ & $345$ & $-2.89$ & $-2.73$ & $-1.96$ & $-0.16$ $[-1.67,+1.05]$ \\
$\geq 4$ & $\geq 10$ & $46$ & $-4.16$ & $-2.96$ & $-4.96$ & $-1.20$ $[-4.35,+0.27]$ \\
\midrule
$\leq 3$ & any & $138$ & $-0.07$ & $+0.24$ & $+0.05$ & intervals include zero \\
\bottomrule
\end{tabular}
\end{table}

\subsection{An evolvability map: competition and headroom}
\label{app:evolvability}

Repeat occupancy and gold headroom describe different aspects of the BM25
replay (\LinkTable{tab:evolvability}). The largest count point estimate,
$+15.99$ $[+12.73,+19.00]$, occurs on the $230$ high-occupancy questions with
missing gold inside the pool. Corresponding mid- and low-occupancy estimates
are $+3.36$ and $+0.61$; questions with all gold already in the top ten tend
to lose. This stratification uses labels and does not establish necessary
conditions for every policy or host.

A separate intervention starts with an empty state and increments only
non-gold entries in the original top ten. It rescues gold at ranks
$11$--$20$ ($+24.7$ at one increment and $+62.9$ at $32$, $n=130$) but not
at deeper ranks. Because at most ten entries are intervened on, a target can
pass at most those entries. The factor-of-two score bound at $\lambda_p=0.5$
does not imply a rank-distance bound for AMIE's full state space: states that
penalise other pool entries were not searched by this intervention.

\begin{table}[htbp]
\caption{\textbf{Index evolvability map} (online count rule, BM25 replay,
$\Delta$Recall@10 against the host, conversation-cluster $95\%$ intervals).
Rows are repeat-occupancy bins of the host's top ten ($<0.4$, $0.4$--$0.8$,
$\geq0.8$); columns class the question's missing gold as already in the top
ten, inside the pool at ranks $11$--$100$, or beyond the pool.}
\label{tab:evolvability}
\centering\small
\setlength{\tabcolsep}{4pt}
\resizebox{\linewidth}{!}{%
\begin{tabular}{lrrr}
\toprule
\textbf{Repeat occupancy} & \textbf{no gold missing} & \textbf{gold reachable (11--100)} & \textbf{gold beyond the pool} \\
\midrule
low ($n{=}138$) & $0.00$ ($n{=}34$) & $+0.61$ $[+0.00,+1.82]$ ($n{=}55$) & $-1.02$ $[-3.26,+0.00]$ ($n{=}49$) \\
mid ($n{=}454$) & $-1.93$ $[-3.78,+0.00]$ ($n{=}207$) & $+3.36$ $[+0.79,+6.09]$ ($n{=}135$) & $-0.74$ $[-1.53,+0.00]$ ($n{=}112$) \\
high ($n{=}935$) & $-2.78$ $[-3.59,-1.86]$ ($n{=}485$) & $+15.99$ $[+12.73,+19.00]$ ($n{=}230$) & $-0.11$ $[-0.38,+0.00]$ ($n{=}220$) \\
\bottomrule
\end{tabular}}

\end{table}

\subsection{How much of the reachable headroom the rule captures}
\label{app:ceilings}

\LinkTable{tab:ceilings} compares the online rule with three oracle
references, not an exhaustive optimisation over index states. Suppressing
only non-gold original top-ten entries with saturated counts gives $+5.54$
$[+4.61,+6.46]$ points; the online gain is $27\%$ of this particular
intervention. Ideal selection of gold from the fixed $100$-candidate pool
gives $+20.43$ $[+18.44,+22.61]$, while perfect recall gives $+48.06$.
The latter two are recall upper bounds for their stated candidate access,
not guarantees attainable by AMIE or improvements in generation utility.

The oracle-count intervention does not optimise every feasible state or
identify a functional-form ceiling. Its gap from online counting combines
oracle label access with different choices of memories to suppress; it is
not an isolated estimate of the cost of unconditional counting. The saved
hindsight references use future labels and are diagnostic only.

\begin{table}[htbp]
\caption{\textbf{Oracle references for index-side recall gains.}
The top-ten intervention suppresses only original non-gold top-ten entries;
pool and perfect-recall references have broader oracle access.
Values are BM25 replay differences with conversation-cluster $95\%$ intervals,
not an optimised functional-form bound.}
\label{tab:ceilings}
\centering\small
\begin{tabular}{lrl}
\toprule
\textbf{Arm} & $\Delta$ \textbf{[95\% CI]} & \textbf{online gain / reference gain} \\
\midrule
online count rule & $+1.48$ $[+0.70,+2.21]$ & --- \\
hindsight $\rho\eta$ oracle (saved) & $+2.24$ & $66\%$ \\
hindsight $\eta$-only oracle (saved) & $+3.13$ & $47\%$ \\
oracle top-ten suppression & $+5.54$ $[+4.61,+6.46]$ & $27\%$ \\
ideal pool recall & $+20.43$ $[+18.44,+22.61]$ & $7\%$ \\
perfect-recall reference & $+48.06$ $[+46.34,+49.81]$ & $3\%$ \\
\bottomrule
\end{tabular}
\end{table}

\begin{figure}[t]
\centering
\includegraphics[width=\linewidth]{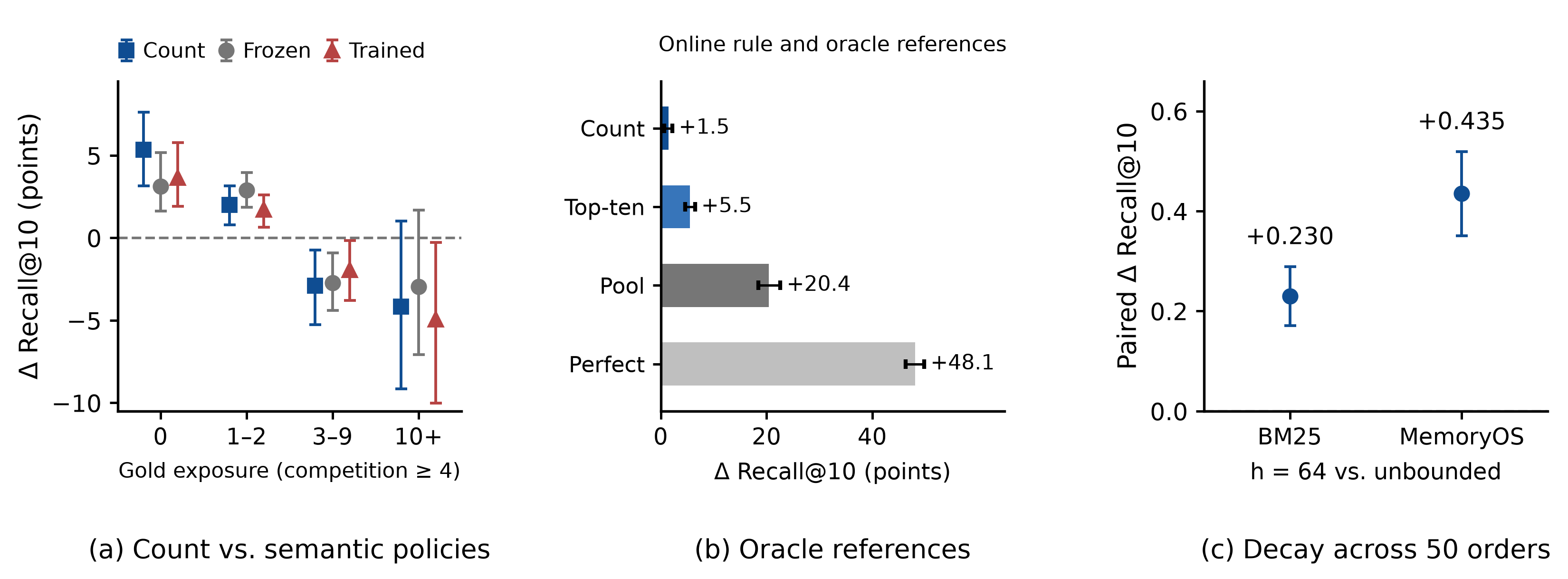}
\begin{subcaptiongroup}
\phantomsubcaption\label{fig:exposure-map}
\phantomsubcaption\label{fig:ceilings}
\phantomsubcaption\label{fig:decay-orders}
\end{subcaptiongroup}
\caption{\textbf{Selectivity, oracle references, and decay.}
Panels (a,b) show BM25 replay recall differences with conversation-cluster
$95\%$ intervals; (c) compares decay to unbounded counting across fifty orders
with $95\%$ $t$ intervals.
Oracle top-ten suppression is a restricted intervention, not a state-space optimum.}
\label{fig:replay-two}
\end{figure}

\subsection{Deployment-specific index states over the same store}
\label{app:swap}

Two deployments share a store when two question streams run over the same
conversation. The LoCoMo categories are different information needs over
one store, so each category's questions form a deployment; as a control, the
even and odd questions of the released order form two deployments with the
same distribution. For every deployment the state accumulates over the
first $75\%$ of its stream and is frozen, and its last $25\%$ is scored under
its own state, under every other deployment's state, and under the union of
the others (\LinkFigure{fig:index-swap}; \LinkTable{tab:swap}). Across the category deployments the
matched state beats the swapped state by $+2.16$ $[+0.54,+3.90]$ on $381$
paired future questions, while across the interleaved control the same
comparison is $+0.52$ $[-1.16,+2.54]$. The category result supports
useful deployment-specific information in the index; the interleaved result
is inconclusive, rather than evidence of equivalence. Neither comparison
optimises the index state or proves that the matched state is optimal.

The control also answers the transfer question for the evolved state. Within
one store the state transfers to new questions of the same deployment (the
matched arms are positive on three of the four categories that have enough
questions), while the transfer cost between interleaved streams is inconclusive; a per-memory state cannot transfer to a new store at all, because its
entries name memories that do not exist there, so cross-store transfer is a
property of policies rather than of states, and the policy result of
\LinkSection{sec:exp-count-default} (students fitted on one host, deployed unchanged
to the others) is the answer to that question.

\begin{table}[htbp]
\caption{\textbf{Content--index swap} (BM25 replay, frozen states,
$\Delta$Recall@10 against the unmodified host on each deployment's future
set, conversation-cluster $95\%$ intervals). The paired rows pool every
future question of a split and compare its own state with a state evolved on
another deployment of the same store.}
\label{tab:swap}
\centering\small
\setlength{\tabcolsep}{4pt}
\resizebox{\linewidth}{!}{%
\begin{tabular}{llrl}
\toprule
\textbf{Split} & \textbf{Arm} & $n$ & $\Delta$ \textbf{[95\% CI]} \\
\midrule
\multirow{5}{*}{categories (different deployments)}
 & category 1, own state & $71$ & $+1.76$ $[+0.01,+3.65]$ \\
 & category 2, own state & $77$ & $+5.20$ $[+0.00,+11.77]$ \\
 & category 4, own state & $213$ & $+3.05$ $[+0.50,+5.48]$ \\
 & \textbf{paired: own $-$ other deployment's state} & $381$ & $\mathbf{+2.16}$ $[+0.54,+3.90]$ \\
 & paired: own $-$ union of the others & $381$ & $+1.12$ $[-1.16,+4.10]$ \\
\midrule
\multirow{3}{*}{even / odd (same distribution)}
 & even, own state & $194$ & $+2.58$ $[+0.00,+5.20]$ \\
 & odd, own state & $193$ & $+3.11$ $[-0.50,+6.92]$ \\
 & paired: own $-$ other deployment's state & $387$ & $+0.52$ $[-1.16,+2.54]$ \\
\bottomrule
\end{tabular}}

\end{table}

\subsection{Memory granularity sets index evolvability}
\label{app:granularity-ladder}

The same conversations are re-indexed at unit sizes of one, two, four and
eight consecutive turns of a session, and at whole sessions; a unit is gold
for a question when it contains any gold turn, and the online count rule is
scored against the static index at each rung with the fixed replay settings,
with the retrieval depth fixed (\LinkTable{tab:granularity-ladder}). Evidence
purity, the share of a retrieved gold unit's turns that are gold for the
question, falls from $1.00$ at turn level to $0.51$, $0.27$, $0.14$ and
$0.05$, and the count gain falls with it, from $+1.48$ $[+0.70,+2.21]$ to
$+0.48$, $+0.09$, $+0.07$ and $+0.04$, with the numbers of questions improved
and worsened becoming similar from two turns per unit on. Static recall rises
across the ladder because the unit-level task becomes easier, while the measured count advantage approaches zero. When a unit mixes
evidence with its surroundings, demoting the unit demotes the evidence, and suppression has no
clear separation between competitor and evidence. This result identifies a
risk in the evaluated count replay, not a requirement that every host use the
finest unit or a guarantee that coarser units cannot benefit.

\begin{table}[htbp]
\caption{\textbf{Granularity ladder} (BM25 replay, online count rule,
Recall@10 over gold units, conversation-cluster $95\%$ intervals; recall
levels are not comparable across rungs). Purity is the mean share of gold
turns in a retrieved gold unit.}
\label{tab:granularity-ladder}
\centering\small
\setlength{\tabcolsep}{4pt}
\begin{tabular}{lrrrrlrr}
\toprule
\textbf{Unit} & \textbf{units} & \textbf{off} & \textbf{+count} & \textbf{purity} & $\Delta$ \textbf{[95\% CI]} & \textbf{improved} & \textbf{worsened} \\
\midrule
1 turn & $5{,}882$ & $51.94$ & $53.41$ & $1.00$ & $+1.48$ $[+0.70,+2.21]$ & $57$ & $30$ \\
2 turns & $3{,}011$ & $64.22$ & $64.70$ & $0.51$ & $+0.48$ $[-0.11,+1.05]$ & $47$ & $45$ \\
4 turns & $1{,}571$ & $74.23$ & $74.32$ & $0.27$ & $+0.09$ $[-0.31,+0.54]$ & $41$ & $40$ \\
8 turns & $848$ & $81.23$ & $81.31$ & $0.14$ & $+0.07$ $[-0.58,+0.62]$ & $31$ & $31$ \\
session & $272$ & $88.91$ & $88.95$ & $0.05$ & $+0.04$ $[-0.28,+0.29]$ & $19$ & $15$ \\
\bottomrule
\end{tabular}
\end{table}

\section{Policy Ablations}
\label{app:policy-ablations}
\label{app:lambda}

\paragraph{Observation regime and suppression rate.}
The nine student variants use the same BM25 retrieval replay as
\LinkTable{tab:retrieval-main}; each regime changes the information supplied
to the action policy. \LinkTable{tab:student-grid} reports all cells and
preserves their paired intervals. Suppression rate correlates with gain
($r=0.79$), while average differences between observation regimes are
smaller than the within-regime seed variation. A random mask matched to the
frozen policy's suppression rate also recovers much of counting's gain in
this replay, supporting accumulation as a shared source of access changes.

\begin{table}[htbp]
\caption{\textbf{Observation ablation in the matched BM25 replay.}
Values are recall-point differences with conversation-cluster $95\%$ intervals;
rate is the fraction of feedback entries suppressed.}
\label{tab:student-grid}
\centering\small
\begin{tabular}{lrrl}
\toprule
Observation & Seed & Rate (\%) & $\Delta$ Recall [95\% CI] \\
\midrule
\multirow{3}{*}{traffic stats} & 1 & 52.4 & $+1.05$ $[+0.40,+1.64]$ \\
 & 2 & 58.6 & $+1.08$ $[+0.17,+1.88]$ \\
 & 3 & 71.6 & $+1.58$ $[+0.88,+2.22]$ \\
\multirow{3}{*}{stats + text} & 1 & 46.7 & $+1.13$ $[+0.49,+1.69]$ \\
 & 2 & 49.3 & $+1.33$ $[+0.77,+1.92]$ \\
 & 3 & 63.3 & $+1.50$ $[+0.79,+2.23]$ \\
\multirow{3}{*}{text only} & 1 & 41.8 & $+1.11$ $[+0.28,+1.92]$ \\
 & 2 & 49.7 & $+1.13$ $[+0.28,+2.15]$ \\
 & 3 & 67.2 & $+1.66$ $[+0.94,+2.34]$ \\
\bottomrule
\end{tabular}
\end{table}

\paragraph{Free parameters.} A $\kappa\times\lambda$ scan over fifteen cells
on the matched replay is positive across a wide plateau and collapses only
where a small saturation constant meets a large strength
($\kappa{=}0.5,\lambda{=}0.9$: $-19.95$). At $\kappa{=}10$ and $\kappa{=}30$
every strength is positive, and the selected cell ($\kappa{=}3$,
$\lambda{=}0.5$, $+1.48$) sits inside the plateau; the scan was not used for
selection. The plateau motivates evaluating both the update rule and its operating
strength within the same host score space.

\section{Boundary Analyses}
\label{app:boundary}

\begin{table}[t]
\caption{\textbf{Deployment diagnostics for persistent index adaptation.}
Rows combine native-host results with explicitly marked BM25 replay
interventions on granularity and store content.
Dashes denote unmeasured or undefined quantities.}
\label{tab:conditions}
\centering\small
\setlength{\tabcolsep}{4pt}
\resizebox{\linewidth}{!}{%
\begin{tabular}{llrrlrl}
\toprule
\textbf{Deployment} & \textbf{Host} & \textbf{q / store} & \textbf{RO} & \textbf{Separable unit} & \textbf{Headroom} & \textbf{Outcome} \\
\midrule
LoCoMo & BM25 & 154 & $0.74$ & yes (turns) & $27.5\%$ & helps \\
LoCoMo & Mem0 & 154 & -- & yes (facts) & -- & helps \\
LoCoMo & MemoryOS & 154 & $0.81$ & pages ($\leq14$ turns) & $22.8\%$ & helps \\
LoCoMo & A-MEM & 154 & -- & roots expand into links & -- & all policies improve F1 \\
\midrule
MemBench, history adapter & BM25 & \textbf{1 final} & -- & yes & -- & count loses; others gain \\
GroupMem (4 domain stores) & BM25 & 145--214 & -- & yes & -- & count loses; others gain \\
\midrule
LoCoMo, $21\times$ self-duplicated & BM25 & 154 & -- & \textbf{no} (copies of evidence) & -- & count loses in this replay \\
\bottomrule
\end{tabular}}
\end{table}

\subsection{Why volume is not the variable}
\label{app:volume}

The dilution model predicts that a larger store is a less accessible one. The BM25 replay
shows that size alone is insufficient, because added text is not independent of the
query: under BM25 far-domain documents raise the document frequency of their
own vocabulary and leave the query's terms rare, so the inverse document
frequency of the terms that matter rises. Dilution and sharpening act in
opposite directions, and which dominates depends on the relation between the
added text and the query rather than on its quantity
(\LinkFigurePanel{fig:evolvability-main}{a}).

A $36$-cell BM25 grid on the matched replay (three distractor families
$\times$ four volumes $\times$ three seeds, every
cell $n{=}1{,}527$ with clustered intervals) makes the separation
quantitative for the rule itself. Under BM25, near-domain decay deepens
monotonically with volume ($-3.1$, $-4.9$, $-6.7$, $-7.7$ points at
$1\times$, $3\times$, $10\times$, $21\times$) while the count gain stays flat
at $+1.3$ to $+1.7$, and far-domain decay never exceeds $2.0$ points at any
volume: the rule buys back a roughly constant share of a cost that grows with
confusability, not with size. Two regimes locate its limits. Duplicating each
conversation's own content grows the decay from $-2.0$ to $-6.3$ points and
turns the count gain from $+1.2$ at $1\times$ into $-1.2$ at $10\times$ and
beyond, because the persistent competitors are then copies of the evidence
itself. These controlled additions separate store size from the competition induced
by the added content (\LinkFigurePanel{fig:evolvability-main}{a}).

\subsection{Structural limits and empirical deployment diagnostics}

\LinkTable{tab:conditions} separates structural limits from empirical
properties. Future retrieval and pool membership constrain what an update
can affect; occupancy, granularity, and calibration are measured risk factors.
\emph{Return to the store} is the number of questions evaluated against one
persistent state: ten conversation stores and $1{,}540$ questions on LoCoMo,
one terminal question per MemBench history, preceded by the adapter's history
retrieval events (\LinkAppendix{app:datasets}). With zero initial state and
no such prior events, a post-sealing update cannot change the only answer. \emph{Contested slots} refers to repeated occupancy of the native feedback
list. \emph{Separable units} concerns the coupling between a selected unit
and the useful evidence it carries or expands into. \emph{Reachable headroom}
is the share of gold outside the returned context but inside the candidate
pool. Together these quantities describe the room for access adaptation;
headroom additionally requires gold labels.

\section{Policy Inference and State Cost}
\label{app:reranker}

\begin{table}[htbp]
\caption{\textbf{Recorded policy cost on native Mem0}, $1{,}540$ queries per
arm. ``Counters'' is the number of memories the state ever touches, and
``state bytes'' the serialised size of the persistent index state.}
\label{tab:costs}
\centering\small
\begin{tabular}{llrrrrr}
\toprule
\textbf{Policy} & \textbf{Seed} & \textbf{Model evaluations} & \textbf{Median (ms)} & \textbf{P95 (ms)} & \textbf{Counters} & \textbf{State bytes} \\
\midrule
off & -- & 0 & - & - & 0 & 2 \\
count & -- & 0 & - & - & 235 & 9{,}669 \\
frozen & -- & 1{,}540 & 23{,}997 & 25{,}400 & 152 & 6{,}255 \\
\multirow{3}{*}{trained} & 20260908 & 1{,}540 & 87.3 & 88.9 & 190 & 7{,}818 \\
 & 20260909 & 1{,}540 & 86.9 & 88.2 & 187 & 7{,}712 \\
 & 20260910 & 1{,}540 & 86.6 & 88.3 & 216 & 8{,}890 \\
\bottomrule
\end{tabular}

\end{table}

Frozen-policy times include API pacing, retries, and network time. Student
times cover the local \texttt{probabilities()} call, including device-to-CPU
transfer; historical GPU identity was not recorded. Both exclude fitting,
model loading, retrieval, and answer generation. The answerer used Qwen3-32B,
BF16, deterministic decoding, batch size one, and at most $128$ output tokens.
State bytes measure compact UTF-8 JSON with memory-ID keys after updates,
reset per conversation; they are not resident RAM. The serving conditions
therefore support a recorded latency comparison, not a hardware-normalised
speedup or an end-to-end agent cost estimate.

\Needspace{12\baselineskip}
\section{Proof of Relative Access Monotonicity}
\label{app:relative-access-proof}

\begin{equation}
\Delta s_t(q,i) \;=\; -\,\lambda_p\,\hat\sigma(q,i)\,\Delta u_{i,t},
\qquad
\Delta u_{i,t}\ge 0\ \Longrightarrow\ \Delta s_t(q,i)\le 0 .
\label{eq:monotone}
\end{equation}
\begin{proposition}[Relative access monotonicity]
\label{prop:margin}
Fix a query $q$, a finite candidate pool, its nonnegative normalised base
scores, a strength $\lambda_p\in[0,1]$, and a total order on candidate IDs
for breaking ties. Let $m^\star$ be a target in that pool, and let one update satisfy
$\Delta u_{\star,t}=0$ and $\Delta u_{j,t}\ge 0$ for all $j\ne\star$. Then
under \LinkEquation{eq:score} no competitor's score increases, so the rank of
$m^\star$ cannot worsen; and, if $\lambda_p>0$, for every competitor $m_j$ with
$\hat\sigma(q,j)>0$ and $\Delta u_{j,t}>0$, the pairwise margin
$\Delta_{\star j,t}(q)=s_t(q,m^\star)-s_t(q,m_j)$ strictly increases.
\end{proposition}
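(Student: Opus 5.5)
The argument works directly from the score form in \LinkEquation{eq:score}, holding $q$, the pool, and the normalised base scores fixed. Only the state changes from $\mathbf u_t$ to $\mathbf u_t+\Delta\mathbf u_t$. Because the score is affine in $u_{i,t}$ with coefficient $-\lambda_p\hat\sigma(q,i)$, the score change is exactly \LinkEquation{eq:monotone}. We need $\hat\sigma(q,i)\ge 0$ and $\lambda_p\ge 0$, so $\Delta u_{i,t}\ge 0$ gives $\Delta s_t(q,i)\le 0$. For the target, $\Delta u_{\star,t}=0$, so its score is unchanged. We should also check that the updated state stays in $[0,1]$, so the score remains of the stated form. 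This holds because $g(n)=n/(n+\kappa)$ maps nonnegative counts into $[0,1)$ and is increasing, so count increments produce $\Delta u\ge 0$.

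\textbf{Rank claim.} We define the rank of $m^\star$ as one plus the number of candidates $j$ that precede it. Candidate $j$ precedes $m^\star$ if $s_t(q,j)>s_t(q,m^\star)$, or if the scores are equal and $j\prec m^\star$ in the fixed tie order. The key step is to show that the set of candidates preceding $m^\star$ after the update is contained in the set before.

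Take $j$ that precedes $m^\star$ after the update. Then $s^{\mathrm{new}}_j\ge s^{\mathrm{new}}_\star$, with ties resolved by $\prec$. We also have $s^{\mathrm{old}}_j\ge s^{\mathrm{new}}_j$ and $s^{\mathrm{old}}_\star=s^{\mathrm{new}}_\star$. Two cases follow.
\begin{itemize}
\item If $s^{\mathrm{new}}_j>s^{\mathrm{new}}_\star$, then $s^{\mathrm{old}}_j>s^{\mathrm{old}}_\star$, so $j$ preceded $m^\star$ before the update.
\item If $s^{\mathrm{new}}_j=s^{\mathrm{new}}_\star$ and $j\prec m^\star$, then either $s^{\mathrm{old}}_j>s^{\mathrm{old}}_\star$, or the scores were equal and the same tie-break applies. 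Either way $j$ preceded $m^\star$ before.
\end{itemize}
Inclusion follows, so the rank cannot worsen. The same inclusion also implies that top-$k$ membership of $m^\star$ under \LinkEquation{eq:wrapper} is preserved.

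\textbf{Margin claim.} Write $\Delta_{\star j}^{\mathrm{new}}-\Delta_{\star j}^{\mathrm{old}}=\Delta s_t(q,\star)-\Delta s_t(q,j)$. Since $\Delta s_t(q,\star)=0$, this equals $-\Delta s_t(q,j)=\lambda_p\hat\sigma(q,j)\Delta u_{j,t}$. This is a product of three strictly positive factors under the hypotheses $\lambda_p>0$, $\hat\sigma(q,j)>0$, and $\Delta u_{j,t}>0$, so the margin strictly increases.

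The only real subtlety is the tie-handling in the rank step. The plan is to fix ranks via the lexicographic order (score descending, then $\prec$) and verify the inclusion case by case as above. It is also worth noting explicitly that the strict margin statement needs $\hat\sigma(q,j)>0$: a zero base score makes that competitor's score immune to suppression.
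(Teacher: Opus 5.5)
Your proof is correct and follows essentially the same route as the paper. You derive the score change $-\lambda_p\hat\sigma(q,i)\Delta u_{i,t}$ and the margin change $\lambda_p\hat\sigma(q,j)\Delta u_{j,t}$, and bound the rank through inclusion of the predecessor set of $m^\star$; you prove that inclusion directly from a new predecessor, while the paper proves it by contrapositive from a non-predecessor, which is equivalent (your check that $g$ keeps $u$ in $[0,1]$ is harmless but unnecessary, since the proposition takes the sign conditions on $\Delta u$ as hypotheses).
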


We prove \LinkProposition{prop:margin} for the same query and candidate
pool before and after one state update. Scores are sorted in descending
order; numerical rank starts at one, so a smaller rank is better.
The base scores, suppression strength, and ID order used to break ties
are held fixed throughout this comparison.

\begin{proof}[Proof of \LinkProposition{prop:margin}]
Write $b_i=\hat\sigma(q,i)\ge0$ and
$\delta_i=u_{i,t+1}-u_{i,t}$. By assumption, $\delta_\star=0$ and
$\delta_j\ge0$ for every competitor $j$. From \LinkEquation{eq:score},
\begin{equation}
s_{t+1}(q,i)-s_t(q,i)=-\lambda_p b_i\delta_i.
\label{eq:relative-score-change}
\end{equation}
Thus the target's score is unchanged and every competitor's score is
nonincreasing. Subtracting the competitor's score from the target's gives
\begin{equation}
\Delta_{\star j,t+1}(q)-\Delta_{\star j,t}(q)
=\lambda_p b_j\delta_j\ge0.
\label{eq:relative-margin-change}
\end{equation}
This inequality is strict whenever $\lambda_p>0$, $b_j>0$, and
$\delta_j>0$, proving the pairwise-margin statement.

To establish the ranking statement, let $P_t$ be the set of candidates
that precede $m^\star$ at step $t$: their scores are higher, or equal
with their IDs preceding the target under the fixed tie order. Take any
$j\notin P_t$, $j\ne\star$. If its score is strictly lower than the
target's, it remains lower because it cannot increase and the target's
score stays fixed. If the scores are equal, $j$ already loses the tie;
after the update it either has a lower score or still loses the same tie.
Hence no such candidate can enter $P_{t+1}$, and
\begin{equation}
P_{t+1}\subseteq P_t,
\qquad r_{t+1}(m^\star)=1+|P_{t+1}|\le1+|P_t|=r_t(m^\star).
\label{eq:relative-rank-change}
\end{equation}
The target's numerical rank therefore cannot increase, completing the proof.
\end{proof}

\paragraph{Top-$k$ consequence and scope.}
For a fixed budget $k$, a target already in the top $k$ remains there.
A previously excluded target enters exactly when
$|P_{t+1}|<k\le|P_t|$. Strict pairwise-margin improvement alone does not
ensure this crossing: competitors may still remain ahead. If
$\lambda_p=0$, all scores and ranks are unchanged. The result compares two
states on the same query; it does not assert monotonic ranks across
different queries. It also does not cover a changing candidate pool,
base scores, strength, tie order, or an update that suppresses the target
itself. Even under the stated conditions, preserving one target's access
does not guarantee higher aggregate recall or downstream utility, since
displaced memories may also be useful.

\Needspace{.60\textheight}
\section{Prior Systems by What Adapts}
\label{app:related-table}

\LinkTable{tab:related} distinguishes native adaptation from an independently
controlled index intervention in the systems we evaluate as hosts. \emph{Persistent access adaptation} includes
changes to links, access metadata, or retrieval controls that affect later
queries; it is not exclusive to AMIE. \emph{Separate index-only policy} means
that learned actions explicitly update persistent access state through
the host's retrieval interface. ``NR'' means that this separate
control is not reported in the cited design, not that the system has no index
adaptation. Our contribution is the isolation and plug-in evaluation of this
axis, not its first occurrence.

\begin{table}[!htbp]
\caption{Native memory and index adaptation mechanisms in the evaluated hosts.
NR: separate index-only policy not reported.}
\label{tab:related}
\centering\footnotesize
\setlength{\tabcolsep}{3pt}
\renewcommand{\arraystretch}{1.12}
\begin{tabular}{@{}>{\raggedright\arraybackslash}p{.22\linewidth}>{\raggedright\arraybackslash}p{.24\linewidth}>{\raggedright\arraybackslash}p{.28\linewidth}>{\raggedright\arraybackslash}p{.18\linewidth}@{}}
\toprule
\textbf{System} & \textbf{Native adaptation target} & \textbf{Persistent access adaptation} & \textbf{Separate index-only policy} \\
\midrule
BM25 & Static sparse retrieval & None & NR \\
Mem0~\citep{mem0} & Fact consolidation & Updated memory records & NR \\
MemoryOS~\citep{kang2025memoryos} & Tiered memory management & Heat and placement & NR \\
A-MEM~\citep{xu2025amem} & Notes and linked metadata & Links coupled to note evolution & NR \\
Memory-R2~\citep{yan2026memoryr2faircreditassignment} & Learned memory management & Reasoned insert, update, and delete operations & NR \\
\midrule
\textbf{AMIE} & \textbf{Persistent access weights} & \textbf{Count and semantic suppression} & \textbf{Explicitly isolated} \\
\bottomrule
\end{tabular}

\end{table}

\end{document}